\documentclass[11pt]{article}
\usepackage[utf8]{inputenc}
\pdfoutput=1
\usepackage{setspace}
\usepackage{palatino}
\usepackage{graphicx}
\usepackage{float}
\usepackage{titling} 
\usepackage{multirow}
\usepackage{lscape}
\usepackage{amsmath}
\usepackage{amssymb}
\usepackage[a4paper, total={6in, 9.5in}]{geometry}
\usepackage{array}
\usepackage[normalem]{ulem}
\fontfamily{ppl}\selectfont 
\usepackage{eqparbox}
\usepackage{arydshln}
\usepackage{mathrsfs}
\usepackage[american]{babel}
\usepackage{mathtools}

\usepackage{rotating}
\usepackage{amsthm}
\newtheorem{theorem}{Theorem}

\usepackage[ruled,vlined]{algorithm2e}

\usepackage{bm}
\usepackage{caption}
\usepackage{subcaption}

\usepackage[natbibapa]{apacite}
\PassOptionsToPackage{hyperindex,breaklinks}{hyperref}
\usepackage{hyperref}
\usepackage{xcolor}
\definecolor{darkblue}{rgb}{0, 0, 0.5}
\hypersetup{colorlinks=true,citecolor=darkblue, linkcolor=darkblue, urlcolor=darkblue}

\usepackage[textsize=small]{todonotes}

\title{Maximum-distance nonnegative matrix factorization for unmixing highly mixed grain-size distribution data: A generalization of
AnalySize}

\author{Qianqian Qi\\
   {\small\raggedright Hangzhou Dianzi University, China}\\
   \href{mailto:q.qi@hdu.edu.cn}{\texttt{q.qi@hdu.edu.cn}} 
\and Zhongming Chen\\
    {\small\raggedright Hangzhou Dianzi University, China}\\
\href{mailto:zmchen@hdu.edu.cn}{\texttt{zmchen@hdu.edu.cn}}
\and Peter G. M. van der Heijden\\
    {\small\raggedright Utrecht University, the Netherlands and University of Southampton, UK}\\
\href{mailto:p.g.m.vanderheijden@uu.nl}{\texttt{p.g.m.vanderheijden@uu.nl}}
    }
    
\predate{}
\postdate{}
\date{}
\date{\vspace{-5ex}}

\begin{document}

{\setstretch{.8}
\maketitle
\begin{abstract}

Nonnegative matrix factorization (NMF) decomposes a nonnegative matrix into the product of two nonnegative matrices. This property makes NMF well suited for unmixing grain-size distribution data, which are inherently nonnegative and have row sums equal to one. Previous studies have shown that AnalySize, an NMF-based method, performs well on poorly
mixed grain-size distribution data but struggles when the data is highly mixed, where no observed samples are close to the true end members. To overcome this limitation, we introduce a maximum-distance NMF that encourages the estimated end members to be as distinct as possible and develop a hierarchical alternating least squares algorithm for optimization. The proposed formulation can be regarded as a generalization of AnalySize, where AnalySize minimizes the distance among end members while the proposed method maximizes it. Experimental results demonstrate that the method effectively decomposes highly mixed grain-size distribution data.\\
\noindent{Keywords: Nonnegative matrix factorization, Highly mixed data, Maximum distance; Grain-size distribution data.}\\
\end{abstract}
}

\section{Introduction}\label{S: Introduction}

Since the seminal work of \citet{lee1999learning} in \textit{Nature}, nonnegative matrix factorization (NMF) has been widely applied in fields such as image processing, blind hyperspectral unmixing, and sedimentary geology \citep{van2018genetically, gillis2020nonnegative, GUO2024102379, SaberiMovahed2025Nonnegative}. Given a nonnegative observed matrix $\bm{X} \in \mathbb{R}_+^{m \times n}$ and a rank $K \le \min\{m, n\}$, NMF approximates $\bm{X}$ by the product of two nonnegative matrices $\bm{M} \in \mathbb{R}_+^{m \times K}$ and $\bm{H} \in \mathbb{R}_+^{K \times n}$, i.e., 
\begin{equation}\label{E: nmf}
\begin{split}
     & \bm{X} \approx \bm{MH}\\
     \text{subject to} \quad & \bm{M} \in \mathbb{R}_+^{m \times K}, \bm{H} \in \mathbb{R}_+^{K \times n}. 
\end{split}
\end{equation}

In sedimentary geology, the observed matrix $\bm{P} \in \Re^{m\times n}_+$ typically represents grain-size distribution data, where each row corresponds to a specimen, is nonnegative, and adds up to one \citep{renner1993resolution, renner1995construction, weltje1997end, weltje2007genetically, paterson2015new, van2018genetically, Dietze2022application, Lin2025Using}. $\bm{P}$ is approximated by the product of $\bm{A}$ and $\bm{S}$, where the rows of $\bm{S}$ represent the end-members and the rows of $\bm{A}$ represent the corresponding abundances of specimens. In addition to nonnegativity, both $\bm{A}$ and $\bm{S}$ satisfy row-sum-to-one constraints, reflecting the compositional nature of the data. Specifically, given $\bm{P} \in \mathbb{R}_+^{m \times n}$ with row-sum-to-one constraints and a rank $K \le \min\{m,n\}$, the grain-size distribution data unmixing problem is formulated as
\begin{equation}\label{E: ema}
\begin{split}
     & \bm{P} \approx \bm{AS}\\
     \text{subject to} \quad & \bm{A} \in \mathbb{R}_+^{m \times K}, \bm{S} \in \mathbb{R}_+^{K \times n}, \bm{A1} = \bm{1}, \bm{S1} = \bm{1}. 
\end{split}
\end{equation}

NMF has been an appealing approach to solving grain-size distribution data unmixing problem because both $\bm{M}$ and $\bm{H}$ in \eqref{E: nmf} satisfy the end-member and abundance nonnegativity constraints in \eqref{E: ema} \citep{HESLOP200763, paterson2015new, qi2026identification}. For example, \citet{HESLOP200763} proposed DRS-unmixer, a multiplicative update NMF algorithm that minimizes the squared error between $\bm{P}$ and its approximation $\bm{AS}$, with row-sum-to-one constraints on $\bm{A}$ and $\bm{S}$ enforced in the final stage of the iterations. \citet{paterson2015new} introduced AnalySize based on a hierarchical alternating least squares (HALS) algorithm. The objective function of AnalySize incorporates a minimum-distance constraint in addition to the squared error, while enforcing the row-sum-to-one constraints during the iterative procedure. \citet{qi2026identification} introduced MAV-NMF based on the alternating projected fast gradient method. The objective function of MAV-NMF incorporates a maximum-volume constraint in addition to the squared error, while enforcing the row-sum-to-one constraints on one factor during the iterative procedure. Note that MAV-NMF does not constrain the other factor to be row-sum-to-one, but the violations of this constraint are small due to other constraints \citep{paterson2015new}.

In a comparative study, \citet{van2018genetically} evaluated several unmixing techniques for unmixing grain-size distribution data, including EMMA \citep{weltje1997end, seidel2015r}, DRS-unmixer \citep{HESLOP200763}, EMMAgeo \citep{DIETZE2012169}, AnalySize \citep{paterson2015new}, and BEMMA \citep{Yu2016BEMMA}. The results showed that, when the number of end members was specified correctly, AnalySize outperformed the other methods for the synthetic coversand dataset and noisy coversand dataset; however, it was still unable to effectively recover the true end members from the synthetic highly mixed coversand dataset \citep{van2018genetically}. Here, highly mixed data refer to data for which no single observed specimen is close to a true end member. In such cases, the estimated end members may remain mixtures of the true ones, failing to recover the underlying structure, as also discussed in related studies \citep{Bioucas62003622012, Heslop2012, paterson2015new}.

To handle highly mixed data, some studies aim to identify end members as distinct as possible \citep{van1999identifiability, ZHANG2020106656, qi2026unmixing, qi2026identification}. For example, \citet{van1999identifiability} applied a simulated annealing algorithm to maximize the chi-squared distances between end members, while \citet{ZHANG2020106656} employed a genetic algorithm to maximize the sum of Manhattan distances between end members and minimize the reconstruction error between $\bm{P}$ and $\bm{AS}$. However, simulated annealing and genetic algorithms are heuristic. More recently, \citet{qi2026unmixing, qi2026identification} explored the use of volume-based measures to promote separation among end members. Overall, research in this direction remains limited.

This study introduces a maximum-distance NMF for highly mixed grain-size data, which can be viewed as a generalization of AnalySize, a widely used grain-size distribution data unmixing approach as introduced before. An estimation algorithm using the HALS is developed. The experimental results show that the proposed method effectively decomposes highly mixed data.

\section{Maximum-distance nonnegative matrix factorization}

The objective function of the proposed maximum-distance nonnegative matrix factorization (MAD-NMF) is formulated as:
\begin{equation}\label{E: madnmfobjpos}
\begin{split}
     \text{min} \quad &||\bm{P} - \bm{A}\bm{S}||_{F}^2 - \lambda ||\bm{C}_{K\times K}\bm{S}||_F^2  \\
 \text{subject to} \quad   &  \bm{A}  \in \Re_{+}^{m\times K}, \bm{S}  \in \Re_{+}^{K\times n}, \bm{A}\bm{1}_{K\times 1} = \bm{1}_{m\times 1}, 
 \bm{S}\bm{1}_{n\times 1} = \bm{1}_{K\times 1},
\end{split}
\end{equation}
where $\lambda > 0$ and $\bm{C}_{K\times K} = \bm{I}_{K\times K} - \frac{1}{n}\bm{1}_{K\times 1}\bm{1}_{K\times 1}^T$. Here, $\bm{I}_{K\times K} \in \Re^{K\times K}$ is an identity matrix and $\bm{1}_{K\times 1} \in \Re^{K\times 1}$ is an all-one vector. For a given matrix $\bm{Y}$, $||\bm{Y}||_F^2 = \sum_i \sum_j \bm{Y}(i,j)^2$. 

The square $||\bm{P} - \bm{AS}||_F^2$ of the Frobenius norm is one of the most commonly used reconstruction errors between $\bm{P}$ and $\bm{AS}$, motivated by the assumption of Gaussian noise \citep{gillis2020nonnegative}. Alternative error measures, such as the Kullback-Leibler divergence, can also be adopted. Following \citet{Yu2012using, paterson2015new, NUS2020104090}, we use $||\bm{C}_{K\times K}\bm{S}||_F^2$, which measures the difference between rows of $\bm{S}$ and their centroid: $\bm{S}(k, :) - \frac{1}{K}\sum_{k' = 1}^{K}\bm{S}(k', :)$. Note that, in AnalySize, \citet{paterson2015new} use $||\bm{C}_{K\times K}\bm{S}\bm{C}_{n\times n}||_F^2$ instead of $||\bm{C}_{K\times K}\bm{S}||_F^2$ with $\bm{C}_{n\times n} = \bm{I}_{n\times n} - \frac{1}{n}\bm{1}_{n\times 1}\bm{1}_{n\times 1}^T$, but $\bm{S}\bm{1}_{n\times 1} = \bm{1}_{K\times 1}$ implies $\bm{C}_{K\times K}\bm{S}\bm{C}_{n\times n} = \bm{C}_{K\times K}\bm{S}$. Therefore, we use $||\bm{C}_{K\times K}\bm{S}||_F^2$.

MAD-NMF uses $-\lambda$, which promotes larger separation among the end members and drives them away from the observation specimens. This property is essential for decomposing highly mixed grain-size distribution data, where no observation specimens are close to the true end members. MAD-NMF is closely related to MDC-NMF \citep{Yu2012using} and AnalySize \citep{paterson2015new}, and the main difference is that MDC-NMF and AnalySize uses $+\lambda$, causing the distance term to act as a penalty that pulls the end members toward their centroid, resulting in end members that lie close to the observation specimens \citep{Yu2012using, paterson2015new}.

\section{Algorithm}\label{S: Algorithm}

The problem in \eqref{E: madnmfobjpos} is non-convex. Following \citet{chen2012hals, paterson2015new}, we employ the hierarchical alternating least squares (HALS) algorithm, and update one column of $\bm{A}$ or one row of $\bm{S}$ at a time.

Given $\bm{A}$ fixed, the subproblem for $\bm{S}$ is
\begin{equation}\label{E: madnmfobjposS}
\begin{split}
     \text{min} \quad &||\bm{P} - \bm{A}\bm{S}||_{F}^2  - \lambda||\bm{C}_{K\times K}\bm{S}||_F^2 \\
 \text{subject to} \quad   &  \bm{S}  \in \Re_{+}^{K\times n}, \bm{S}\bm{1}_{n\times 1} = \bm{1}_{K\times 1}.
\end{split}
\end{equation}
Using the HALS strategy, we update one row $\bm{S}(k,:)$ of $\bm{S}$ at a time. Let $\bm{P}^{(k)} = \bm{P} - \sum_{l \neq k}\bm{A}(:, l)\bm{S}(l, :)$, where $\bm{A}(:, l)$ is the $l$-th column of $\bm{A}$ and $\bm{S}(l, :)$ is the $l$-th row of $\bm{S}$. Let $g(\bm{S}(k, :)) = ||\bm{P}^{(k)} - \bm{A}(:, k)\bm{S}(k, :)||_{F}^2 - \lambda \sum_{k = 1}^{K} ||\bm{S}(k, :)-\frac{1}{K}\left(\bm{S}(k, :) + \sum_{l \neq k}\bm{S}(l, :)\right)||_2^2$. Then the subproblem of \eqref{E: madnmfobjposS} becomes
\begin{equation}\label{E: madnmfobjposShalsbe}
\begin{split}
     \text{min}\quad &g(\bm{S}(k, :))
     \\ 
      \text{subject to} \quad &  \bm{S}(k, :)  \in \Re_{+}^{1\times n}, \bm{S}(k, :)\bm{1}_{n\times 1} = 1.
\end{split}
\end{equation}
The Hessian matrix of $g(\bm{S}(k, :))$ is $(2\|\bm{A}(:,k)\|_2^2 - 2\lambda\left(1-\frac{1}{K}\right)^2)\bm{I}_{n\times n}$, which is positive definite provided that $\lambda < \|\bm{A}(:,k)\|_2^2
/ \left(1-\frac{1}{K}\right)^2$.
 The gradient of $g(\bm{S}(k, :))$ with respect to $\bm{S}(k, :)$ is
\begin{equation*}\footnotesize
\begin{split}
     \frac{\partial g(\bm{S}(k, :))}{\partial \bm{S}(k, :)}  &=2\bm{A}(:, k)^T\left(\bm{A}(:, k)\bm{S}(k, :) - \bm{P}^{(k)}\right) 
     -2\lambda \left(1-\frac{1}{K}\right)\left(\bm{S}(k, :)-\frac{1}{K}\left(\bm{S}(k, :) + \sum_{l \neq k}\bm{S}(l, :)\right)\right)\\
     \\&
     =2\bm{A}(:, k)^T\bm{A}(:, k)\bm{S}(k, :) - 2\bm{A}(:, k)^T\bm{P}^{(k)}
     -2\lambda \left(1-\frac{1}{K}\right)^2\bm{S}(k, :) + 2\lambda \frac{1}{K}\left(1-\frac{1}{K}\right)\sum_{l \neq k}\bm{S}(l, :)
\end{split}
\end{equation*}
By setting $\frac{\partial g(\bm{S}(k, :))}{\partial \bm{S}(k, :)}$ to 0, the HALS update can be obtained as
\begin{equation}\label{E: madnmfobjposShals}
     \bm{S}(k, :) = \text{Proj}_{\bm{S}(k, :) \geq 0, \bm{S}(k, :)\bm{1} = {1}}\left(\frac{\bm{A}(:, k)^T \bm{P}^{(k)} - \frac{\lambda}{K}\left(1-\frac{1}{K}\right)\sum_{l \neq k}\bm{S}(l, :)}{||\bm{A}(:, k)||^2_2 - \lambda\left(1-\frac{1}{K}\right)^2}\right).
\end{equation}
\noindent Given a vector $\bm{y}^T\in \Re^J$, $\text{Proj}_{\bm{y} \geq 0, \bm{y}\bm{1} = {1}}(\bm{y})$
represents Euclidean projection of $\bm{y}$ on probability simplex $\{\bm{y}^T \in \Re^J | \bm{y} \geq 0, \bm{y}\bm{1} = {1}\}$, which has an unique solution. For detailed implementation, see \citet{Wang2013Projection}.

Given $\bm{S}$ fixed, the subproblem for $\bm{A}$ is
\begin{equation*}
\begin{split}
     \text{min} \quad &||\bm{P} - \bm{A}\bm{S}||_{F}^2\\
 \text{subject to} \quad   &  \bm{A}  \in \Re_{+}^{m\times K}, \bm{A}\bm{1}_{K\times 1} = \bm{1}_{m\times 1}.
\end{split}
\end{equation*}
Following \citet{paterson2015new}, we enforce the row-sum-to-one constraint $\bm{A}\bm{1} = \bm{1}$  via a regularized term, yielding:
\begin{equation}\label{E: madnmfobjposAsumtoone}
\begin{split}
     \text{min} \quad &||\bm{P} - \bm{A}\bm{S}||_{F}^2 + \alpha ||\bm{A}\bm{1}_{K\times 1} - \bm{1}_{m\times 1}||_2^2\\
 \text{subject to} \quad   &  \bm{A}  \in \Re_{+}^{m\times K}.
\end{split}
\end{equation}
This makes the Hessian matrix of the optimization problem positive definite as seen later. Using the HALS strategy, we update one column $\bm{A}(:,k)$ at a time. Let $f(\bm{A}(:, k)) = ||\bm{P}^{(k)} - \bm{A}(:, k)\bm{S}(k, :)||_{F}^2 +\alpha ||\bm{A}(:, k) + \sum_{l\neq k}\bm{A}(:, l) - \bm{1}_{m\times 1}||_2^2$. Then, the subproblem of \eqref{E: madnmfobjposAsumtoone} becomes
\begin{equation*}
\begin{split}
    \text{min} \quad &f(\bm{A}(:, k))\\
     \text{subject to} \quad   &  \bm{A}(:, k)  \in \Re_{+}^{m\times 1}.
\end{split}
\end{equation*}
The Hessian matrix of $f(\bm{A}(:, k))$ is $(2||\bm{S}(k, :)||^2 + 2\alpha)\bm{I}_{m\times m}$, which is positive definite provided that $\alpha > 0$.
The gradient of $f(\bm{A}(:, k))$ with respect to $\bm{A}(:, k)$ is
\begin{equation*}
    \frac{\partial f(\bm{A}(:, k))}{\partial \bm{A}(:, k)} = 2\left(\bm{A}(:, k)\bm{S}(k, :) - \bm{P}^{(k)}\right)\bm{S}(k, :)^T + 2\alpha \left(\bm{A}(:, k) + \sum_{l\neq k}\bm{A}(:, l) - \bm{1}_{m\times 1}\right).
\end{equation*}
By setting $\frac{\partial f(\bm{A}(:, k))}{\partial \bm{A}(:, k)}$ to 0, the HALS update can be obtained as
\begin{equation}\label{E: madnmfobjposAhals}
       \bm{A}(:, k) = \text{max}\left\{0, \frac{\bm{P}^{(k)}\bm{S}(k, :)^T + \alpha (\bm{1}_{m\times 1} - \sum_{l \neq k}\bm{A}(:, l))}{||\bm{S}(k, :)||^2 + \alpha}\right\},
\end{equation}
where the max operation is performed componentwise.

The HALS procedure is summarized in Algorithm~\ref{alg: hals}. After HALS, a final fully constrained least squares refinement is applied to obtain abundance estimates $\bm{A}$ based on the estimated end-member matrix $\bm{S}$ and observed matrix $\bm{P}$ \citep{heinz2001fully}. This algorithm is similar to that of AnalySize \citep{paterson2015new}.

\begin{algorithm}
\caption{Hierarchical alternating least squares (HALS) for solving \eqref{E: madnmfobjpos} \citep{chen2012hals, paterson2015new}}
\label{alg: hals}
\KwIn{Input matrix $\bm{P}$, dimensionality $K$, number of iterations \textit{iter}, $\lambda$, and $\alpha$.}
\KwOut{$\bm{A}$ and $\bm{S}$.}

Generate initial matrix $\bm{S}$ using SISAL \citep{Bioucas5289072} and then initial $\bm{A}$ using fully constrained least squares based on the initial $\bm{S}$ and observed $\bm{P}$ \citep{heinz2001fully}; See \citet{paterson2015new} for details.

\For{$t = 1, 2, \ldots, \text{iter}$}{
\For{$k = 1, 2, \ldots, K$}{
Update $\bm{S}(k, :)$ using \eqref{E: madnmfobjposShals};

Update $\bm{A}(:, k)$ using \eqref{E: madnmfobjposAhals}
}}
\end{algorithm}

\subsection{Convergence analysis}

\textbf{Assumption 1}: There is a constant $c > 0$, such that in the iteration process, $||\bm{A}(:, k)||_2^2 \geq c$ for any $k = 1, \cdots, K$.

\noindent This assumption is reasonable because zero columns in $\bm{A}$ are not typically encountered in practice. In our experiments, the columns of $\bm{A}$ remain nonzero throughout the iterations.

HALS algorithm is a block coordinate descent method \citep{gillis2020nonnegative}. Following \citet{gillis2020nonnegative}, we adapt Proposition 2.7.1 from \citet{Bertsekas1999, Bertsekas1999correction} to obtain the following theoretical convergence analysis. 

\begin{theorem}
   Under Assumption 1, assume $\alpha > 0$ and $0 < \lambda \leq c/(
1-1/K)^2$, it follows that the limit point of the iterates of Algorithm~\ref{alg: hals} is a stationary point of the optimization problem:
\begin{equation}\label{E: madnmfobjposasumto1}
\begin{split}
\text{min} \quad & \|\bm{P}-\bm{A}\bm{S}\|_F^2
-\lambda\|\bm{C}_{K\times K}\bm{S}\|_F^2
+\alpha\|\bm{A}\bm{1}_{K\times1}-\bm{1}_{m\times1}\|_2^2\\
 \text{subject to} \quad   &  \bm{A}  \in \Re_{+}^{m\times K}, \bm{S}  \in \Re_{+}^{K\times n},  \bm{S}\bm{1}_{n\times 1} = \bm{1}_{K\times 1}.
\end{split}
\end{equation} 
\end{theorem}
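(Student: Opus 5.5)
We view Algorithm~\ref{alg: hals} as a block coordinate descent (BCD) method on the objective $F(\bm{A},\bm{S})$ of \eqref{E: madnmfobjposasumto1}, with $2K$ blocks updated cyclically in the order $\bm{S}(1,:),\bm{A}(:,1),\ldots,\bm{S}(K,:),\bm{A}(:,K)$. The plan is to verify the hypotheses of Proposition 2.7.1 of \citet{Bertsekas1999, Bertsekas1999correction}, and conclude directly from it. That proposition requires three things. First, the feasible set is a Cartesian product of closed convex sets. Second, $F$ is continuously differentiable on it. Third, each block minimization has a unique minimizer, which the algorithm computes exactly.

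The first two conditions are routine. The feasible set is $\prod_{k}\{\bm{a}\in\Re^m_+\}\times\prod_k\Delta_n$, where $\Delta_n$ is the probability simplex. This is a product over the blocks of closed convex sets, because $\bm{S}\bm{1}=\bm{1}$ decouples row by row and $\bm{A}\ge 0$ decouples column by column. The row-sum constraint on $\bm{A}$ was moved into the penalty, which is exactly why \eqref{E: madnmfobjposasumto1} has no coupling constraint on $\bm{A}$. $F$ is a polynomial, hence $C^\infty$.

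For the third condition, take each block with the others fixed.

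\emph{Column blocks.} $F$ restricted to $\bm{A}(:,k)$ equals $f$ up to a constant. Its Hessian $(2\|\bm{S}(k,:)\|^2+2\alpha)\bm{I}$ is positive definite since $\alpha>0$. The minimizer over $\Re^m_+$ is therefore unique. Because the Hessian is a scalar multiple of the identity, the problem separates coordinatewise, and the componentwise $\max\{0,\cdot\}$ of the unconstrained minimizer, namely \eqref{E: madnmfobjposAhals}, is exactly that minimizer.

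\emph{Row blocks.} $F$ restricted to $\bm{S}(k,:)$ equals $g$ up to a constant. We expand $\lambda\|\bm{C}\bm{S}\|_F^2$ as a function of $\bm{S}(k,:)$ alone, collecting the contributions of all $K$ rows of $\bm{C}\bm{S}$ that involve $\bm{S}(k,:)$. The quadratic coefficient is then $\|\bm{A}(:,k)\|^2 - \lambda\,\theta_K$ for an explicit constant $\theta_K$ (the paper's computation gives $(1-1/K)^2$). Hence $g(\bm{s}) = \beta\|\bm{s}-\bm{z}\|^2+\text{const}$ with $\beta=\|\bm{A}(:,k)\|^2-\lambda(1-1/K)^2$, where $\bm{z}$ is the argument of $\text{Proj}$ in \eqref{E: madnmfobjposShals}. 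When $\beta>0$, minimizing $g$ over $\Delta_n$ is precisely the Euclidean projection of $\bm{z}$ onto $\Delta_n$. That projection is unique, so \eqref{E: madnmfobjposShals} is the exact unique block minimizer. Assumption 1 gives $\|\bm{A}(:,k)\|^2\ge c$, so $\lambda< c/(1-1/K)^2$ ensures $\beta>0$ uniformly along the iterations.

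The main obstacle is the boundary case $\lambda=c/(1-1/K)^2$ allowed in the statement. There $\beta$ can vanish whenever $\|\bm{A}(:,k)\|^2=c$ exactly. Then $g$ is linear on the compact simplex, and its minimizer is attained but in general not unique, so the update is ill-defined. My first attempt would be to note that at $\beta=0$ the minimizer is unique unless the linear coefficient has ties, and to invoke the corrected version \citet{Bertsekas1999correction}, which only needs uniqueness along the limit sequence or strict quasi-convexity in blocks. Failing that, I would restate the hypothesis as strict inequality, or require Assumption 1 with strict inequality so that $\beta\ge\|\bm{A}(:,k)\|^2-c>0$ along iterates; equivalently, use some $c'<c$ in the bound. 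With uniqueness in hand, Proposition 2.7.1 yields that every limit point of the iterates is a stationary point of \eqref{E: madnmfobjposasumto1}. Existence of a limit point follows from boundedness: $\bm{S}$ lies in a compact set, and $\bm{A}$ is bounded because of the $\alpha$-penalty together with monotone decrease of $F$, which is bounded below on the feasible set since $\|\bm{C}\bm{S}\|_F^2$ is bounded there.
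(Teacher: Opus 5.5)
\textbf{Verdict: genuine gap.} Your route is the paper's: view Algorithm~\ref{alg: hals} as block coordinate descent and check the hypotheses of Bertsekas's Proposition 2.7.1 block by block. The product-set, smoothness and $\bm A$-column steps are fine. Your boundary-case remark is also correct, and sharper than the paper: the paper calls the $\bm S$-block Hessian positive definite, but at $\lambda=c/(1-1/K)^2$ it is only semidefinite and \eqref{E: madnmfobjposShals} can divide by zero.

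The gap is in the row blocks. You say you will collect the contributions of all $K$ rows of $\bm C\bm S$. You then take $\theta_K=(1-1/K)^2$ from the paper and conclude that \eqref{E: madnmfobjposShals} is the exact block minimizer of $F$. Actually do the expansion. Use $\bm C=\bm I-\frac1K\bm 1\bm 1^T$ (the $\frac1n$ in the paper is a typo), which is symmetric and idempotent, and set $\bm s=\bm S(k,:)$, $\bm r=\sum_{l\neq k}\bm S(l,:)$. Then
\[
\|\bm C\bm S\|_F^2=\operatorname{tr}(\bm S^T\bm C\bm S)=\Bigl(1-\tfrac1K\Bigr)\|\bm s\|_2^2-\tfrac2K\,\bm s\bm r^T+\text{const}.
\]
So $\theta_K=1-\frac1K$, and the linear term carries no factor $(1-\frac1K)$. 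The paper's gradient differentiates only the $k$-th row of $\bm C\bm S$. It misses that the centroid in the other $K-1$ rows also moves with $\bm s$. The exact block minimizer of $F$ is therefore $\text{Proj}\bigl((\bm A(:,k)^T\bm P^{(k)}-\frac{\lambda}{K}\bm r)/(\|\bm A(:,k)\|_2^2-\lambda(1-\frac1K))\bigr)$. The two block objectives differ by $\frac{\lambda}{K}\bigl[(1-\frac1K)\|\bm s\|_2^2-\frac2K\bm s\bm r^T\bigr]$, which is not constant on the simplex, so this minimizer and \eqref{E: madnmfobjposShals} differ in general.

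What \eqref{E: madnmfobjposShals} does minimize exactly is the $\bm S(k,:)$-restriction of $F$ with $\lambda$ replaced by $\lambda(1-\frac1K)$. With your strict-inequality fix, your argument therefore proves that limit points are stationary for \eqref{E: madnmfobjposasumto1} with weight $\lambda(1-\frac1K)$ on the distance term. For that rescaled problem the block Hessian is $2(\|\bm A(:,k)\|_2^2-\lambda(1-\frac1K)^2)\bm I$, so the hypothesis $\lambda<c/(1-1/K)^2$ is exactly the right one there.

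It does not prove the statement with weight $\lambda$. For that you would have to replace the update by the correct projection above. Uniqueness then needs $\lambda<c/(1-\frac1K)$, a strictly smaller range than the one stated. Your existence-of-a-limit-point argument has the same problem: monotone decrease is guaranteed for the rescaled objective, not for $F$. The paper's own proof contains exactly this error, in its step 3 and in the gradient preceding \eqref{E: madnmfobjposShals}. Deferring to the paper's computation therefore cannot close the gap.
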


\begin{proof}
Denote $F(\bm{A}, \bm{S}) = \|\bm{P}-\bm{A}\bm{S}\|_F^2
-\lambda\|\bm{C}_{K\times K}\bm{S}\|_F^2
+\alpha\|\bm{A}\bm{1}_{K\times1}-\bm{1}_{m\times1}\|_2^2$.

1. The objective function $F(\bm{A},\bm{S})$ is continuously differentiable .

2. For updating $\bm{S}(k, :)$, $\{\bm{S}(k, :) | \bm{S}(k, :)  \in \Re_{+}^{1\times n}, \bm{S}(k, :)\bm{1}_{n\times 1} = 1\}$ is a closed convex set; for updating $\bm{A}(:, k)$, $\{\bm{A}(:, k) | \bm{A}(:, k)  \in \Re_{+}^{m\times 1}\}$ is a closed convex set.

3. As analyzed above, the Hessian matrix of the $k$-th $\bm{A}$-subproblem is $(2||\bm{S}(k, :)||^2 + 2\alpha)\bm{I}_{m\times m}$,
which is positive definite. Hence each $\bm{A}(:,k)$ subproblem is strictly convex and admits a unique minimizer. As analyzed above, the Hessian matrix of the $k$-th $\bm{S}$-subproblem is $(2\|\bm{A}(:,k)\|_2^2 - 2\lambda\left(1-\frac{1}{K}\right)^2)\bm{I}_{n\times n}$,
which is positive definite. Hence each $\bm{S}(k, :)$ subproblem is strictly convex and admits a unique minimizer.

4. Since each HALS update exactly minimizes one block while keeping the remaining blocks fixed, the objective value is monotonically nonincreasing after each block update. 

Therefore, all conditions of the block coordinate descent convergence theorem \citep{Bertsekas1999, Bertsekas1999correction, gillis2020nonnegative} are satisfied.
\end{proof}

Note that the convergence analysis is established for the penalized
objective function in Problem~\eqref{E: madnmfobjposasumto1}, rather than
the original constrained formulation in Problem~\eqref{E: madnmfobjpos}.
This is because the equality constraint on $\bm A$ is incorporated into
the objective function through the quadratic penalty term during the
optimization process (see (\ref{E: madnmfobjposAsumtoone})).

\section{Experiments settings}

This section describes the generation of the artificial datasets and the evaluation metrics used in the experiments.

\subsection{Generation of artificial grain-size data}

\subsubsection{Highly mixed two-end-member dataset}

\citet{paterson2015new} used a synthetic highly mixed dataset to demonstrate that AnalySize fails to recover the underlying end members. We use the same data generation procedure as in \citet{qi2026unmixing} to generate a highly mixed dataset, which is based on \citet{paterson2015new}. This dataset is constructed from two end members, each defined by a lognormal distribution over 100 grain-size classes (Figure~\ref{F: twodatatrueems}). The abundances of the two end members for each of the 99 specimens are generated such that the abundance of one specimen is uniformly distributed between 0.13 and 0.87, with the other abundance equal to one minus the first. Consequently, no specimen is close to a pure end member. For convenience, we refer to this dataset as the highly mixed two-end-member dataset.

\begin{figure}[H] 
\centering 
\begin{subfigure}[b]{0.35\linewidth} \includegraphics[width=1\textwidth]{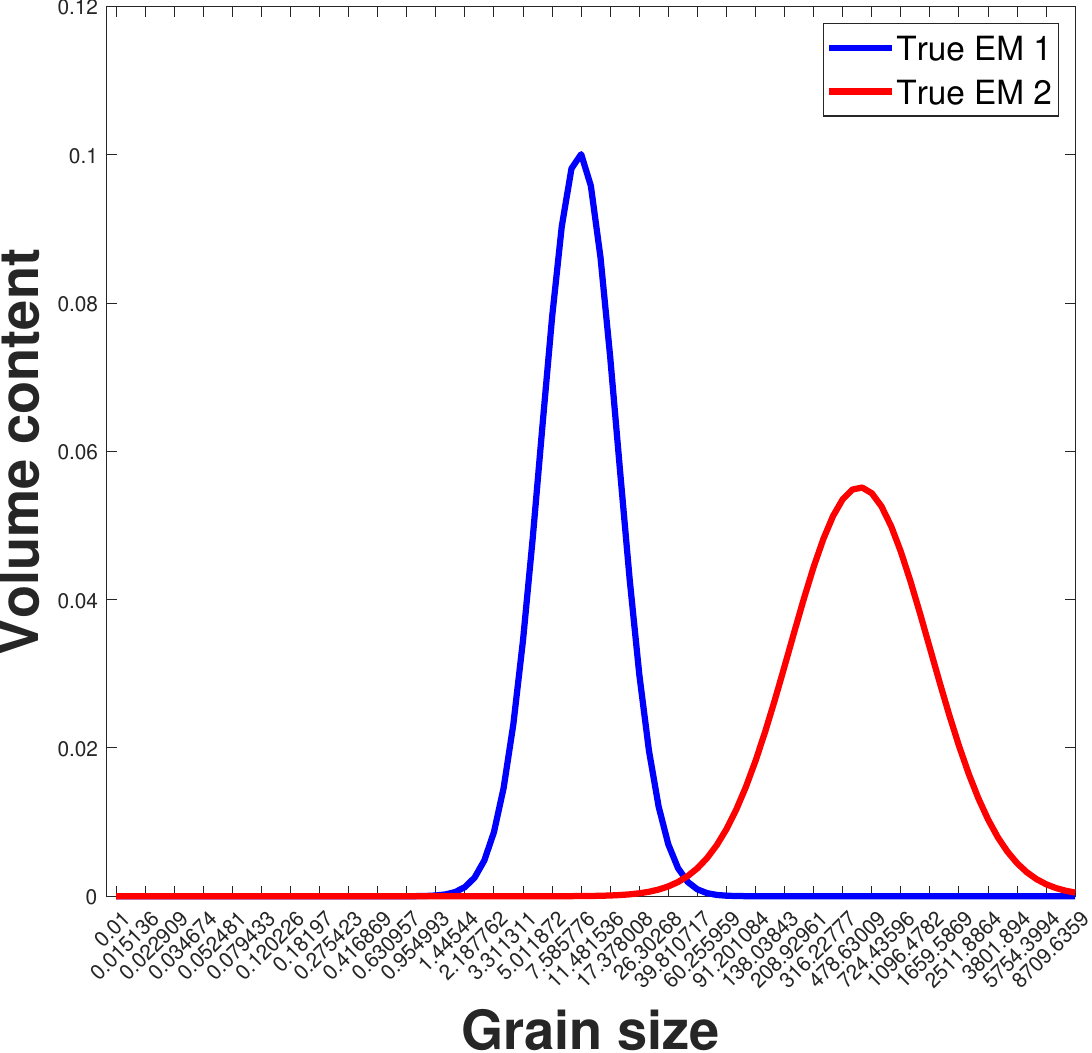}
\caption{Two-end-member dataset \citep{paterson2015new, qi2026unmixing}}\label{F: twodatatrueems} \end{subfigure} 
\begin{subfigure}[b]{0.35\linewidth} \includegraphics[width=1\textwidth]{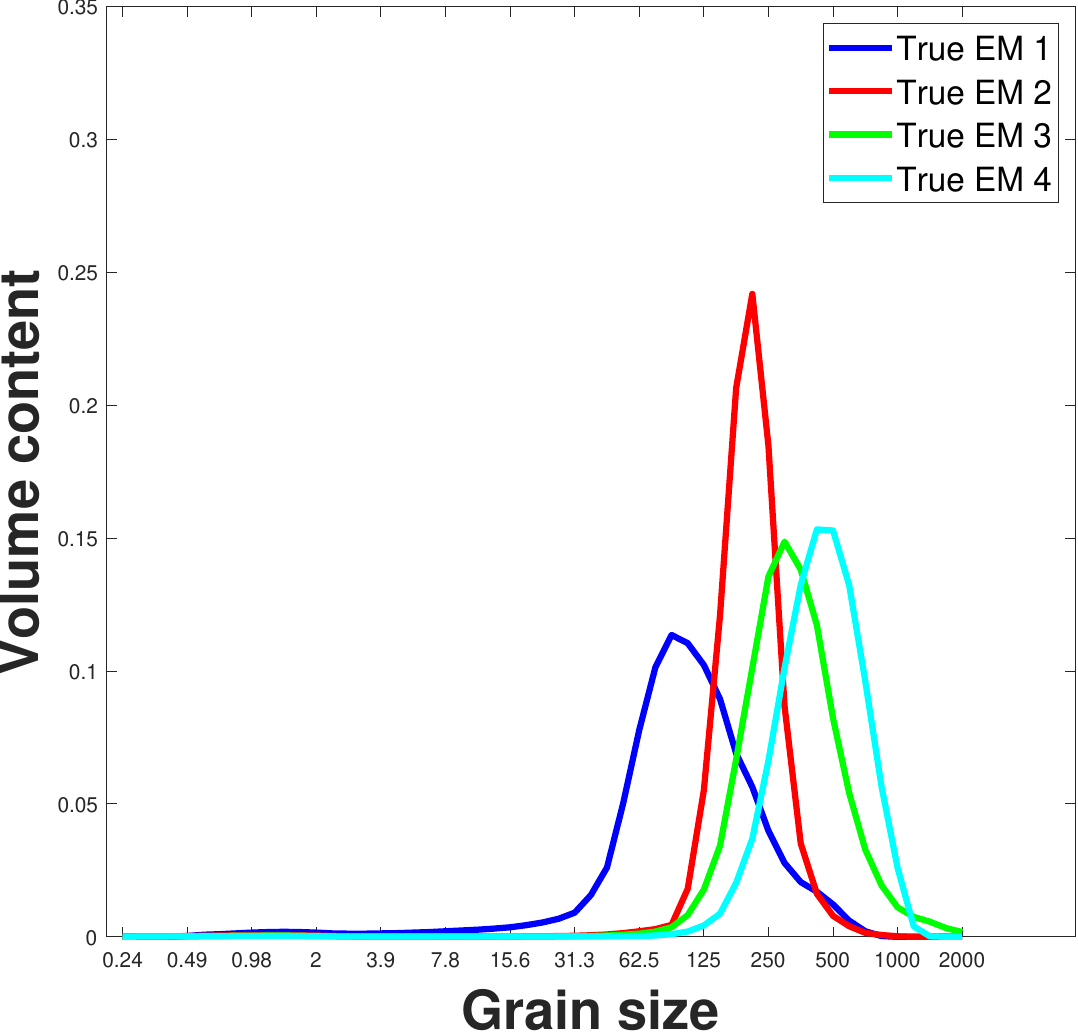} 
\caption{Coversand dataset \citep{van2018genetically, ZHANG2020106656}}\label{F: coversanddatatrueems}
\end{subfigure} 
\caption{True end members}\label{F: trueendmember} 
\end{figure}

In addition, a noisy version of the highly mixed two‑end‑member dataset is generated by multiplying the original data element‑wise by random numbers with mean 1 and standard deviation 0.01. Negative values are then set to zero, and each specimen is subsequently renormalized to sum to one.

\subsubsection{Highly mixed coversand dataset}

As introduced in Section~\ref{S: Introduction}, AnalySize cannot effectively recover the end members from the highly mixed coversand data \citep{van2018genetically}. Following a similar data generation procedure to \citet{van2018genetically}, we generate a synthetic highly mixed coversand dataset consisting of 200 specimens by linearly mixing the four coversand end members reported by \citet{ZHANG2020106656}.  For each specimen, the abundances are generated uniformly within the interval [0.08,0.49] and accepted only if they sum to one. Consequently, all abundances satisfy the constraints of being nonnegative, summing to one, and lying between 0.08 and 0.49.

Again, to generate a noisy version of the highly mixed coversand dataset, we multiply the original data element-wise by random numbers with mean 1 and standard deviation 0.01. Negative values are then set to zero, and each specimen is subsequently renormalized to sum to one.
    
\subsection{Evaluation}

Performance is evaluated using the mean
angle between the estimated and true end members (MAEM), mean angle between the estimated and true abundances (MAAB) together with visual inspection \citep{paterson2015new, qi2026unmixing}. The MAEM and MAAB are defined as \citep{qi2026unmixing}
\begin{equation*}
    \text{MAEM} = \frac{1}{K}\sum_k\left(\frac{180}{\Pi}\text{arcos}\left(\frac{\bm{S}(k, :)\hat{\bm{S}}(k, :)^T}{||\bm{S}(k, :)||_2||\hat{\bm{S}}(k, :)||_2}\right)\right)
\end{equation*}
and
\begin{equation*}
    \text{MAAB} = \frac{1}{m}\sum_i\left(\frac{180}{\Pi}\text{arcos}\left(\frac{\bm{A}(i, :)\hat{\bm{A}}(i, :)^T}{||\bm{A}(i, :)||_2||\hat{\bm{A}}(i, :)||_2}\right)\right)
\end{equation*}
where $\bm{S}(k, :)$ and $\hat{\bm{S}}(k, :)$ are the $k$th true and estimated end members, respectively, and $\bm{A}(i, :)$ and $\hat{\bm{A}}(i, :)$ are the true and estimated abundances of the $i$th specimen, respectively. Smaller values of MAEM and MAAB indicate better performance.

Although the coefficient of determination ($R^2$) is commonly used to evaluate grain-size distribution unmixing models, it measures the reconstruction accuracy of $\bm{P}$ rather than the accuracy of the estimated end members and abundances \citep{paterson2015new, van2018genetically, ZHANG2020106656}. Therefore, we do not use $R^2$ as an evaluation metric in this study.

\section{Experimental results}

In this section, we compare AnalySize with the proposed maximum-distance NMF (MAD-NMF) on highly mixed grain-size distribution datasets and demonstrate that MAD-NMF effectively recovers the underlying end members and abundances. The code for this paper is implemented in MATLAB R2024b and is available on the GitHub website \url{https://github.com/qianqianqi28/MAD-NMF}.

\subsection{Highly mixed two-end-member dataset}

We apply AnalySize and MAD-NMF to the highly mixed two-end-member dataset $\bm{P}$ to estimate the abundance matrix $\bm{A}$ and end-member matrix $\bm{S}$. 

Figures~\ref{F: twodataminems} and~\ref{F: twodataminabundances} present the estimated end members and abundances obtained by AnalySize, respectively. The results show that AnalySize fails to recover the true end members and abundances, which is consistent with the findings in \citet{paterson2015new, van2018genetically}. This failure occurs because AnalySize encourages the estimated end members to lie closer to one another and, consequently, closer to the observed specimens, whereas the true end members are located farther from the observed specimens.

Figures~\ref{F: twodatamaxems} and~\ref{F: twodatamaxabundances} present the estimated end members and abundances obtained by MAD-NMF, respectively. In contrast to AnalySize, MAD-NMF effectively recovers the end members and abundances. This is because MAD-NMF encourages greater separation among the end members, allowing them to be located farther from the observed specimens. Consequently, MAD-NMF is well-suited for highly mixed data. 

Figure~\ref{F: noisytwodata} shows the estimated end members and abundances obtained by AnalySize and MAD-NMF for the noisy highly mixed two-end-member dataset. The results are similar to those obtained for the noise-free dataset.

\begin{figure}
\centering
\begin{subfigure}[b]{0.315\linewidth}
\includegraphics[width=1\textwidth]{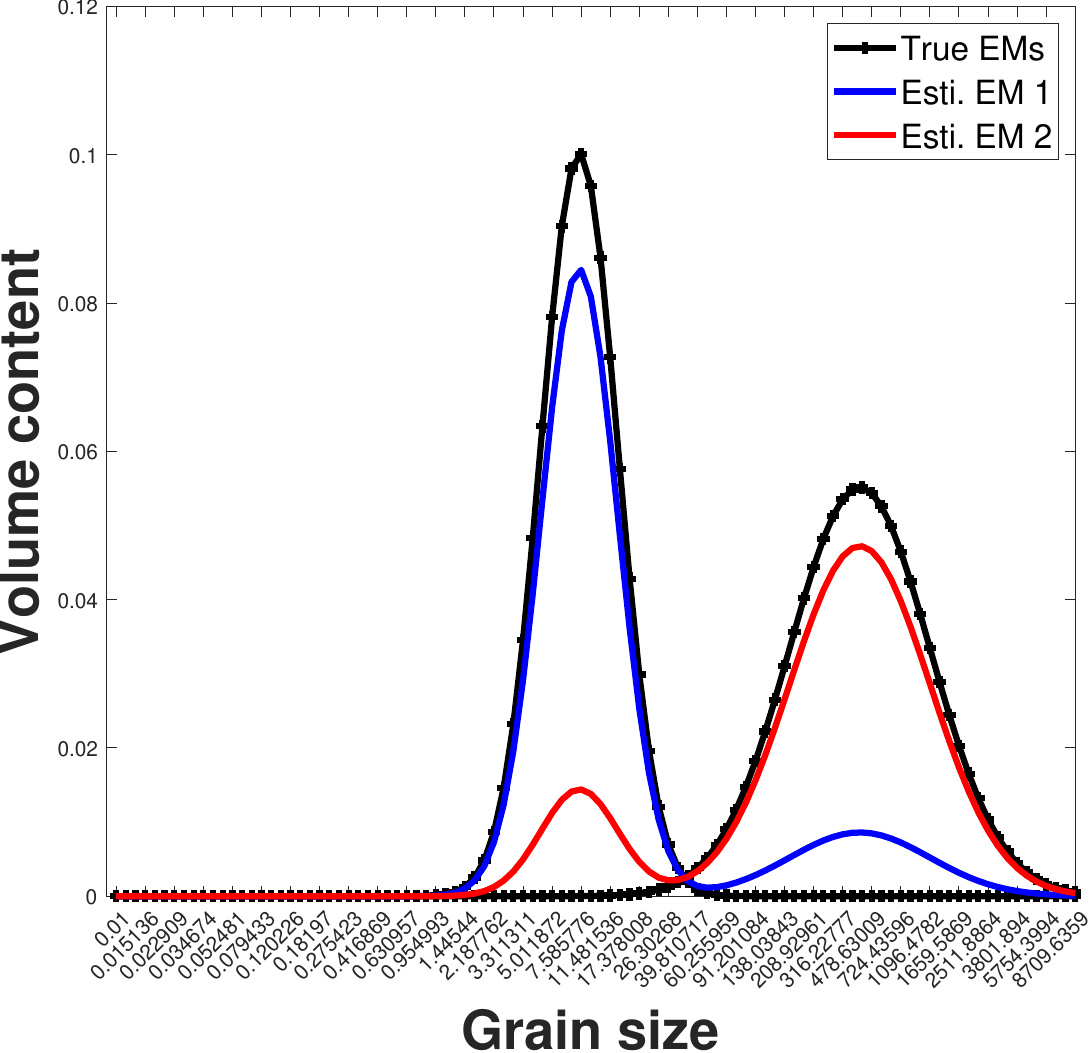}
    \caption{AnalySize: End Members}
    \label{F: twodataminems}
 \end{subfigure}
    \begin{subfigure}[b]{0.3\linewidth}
\includegraphics[width=1\textwidth]{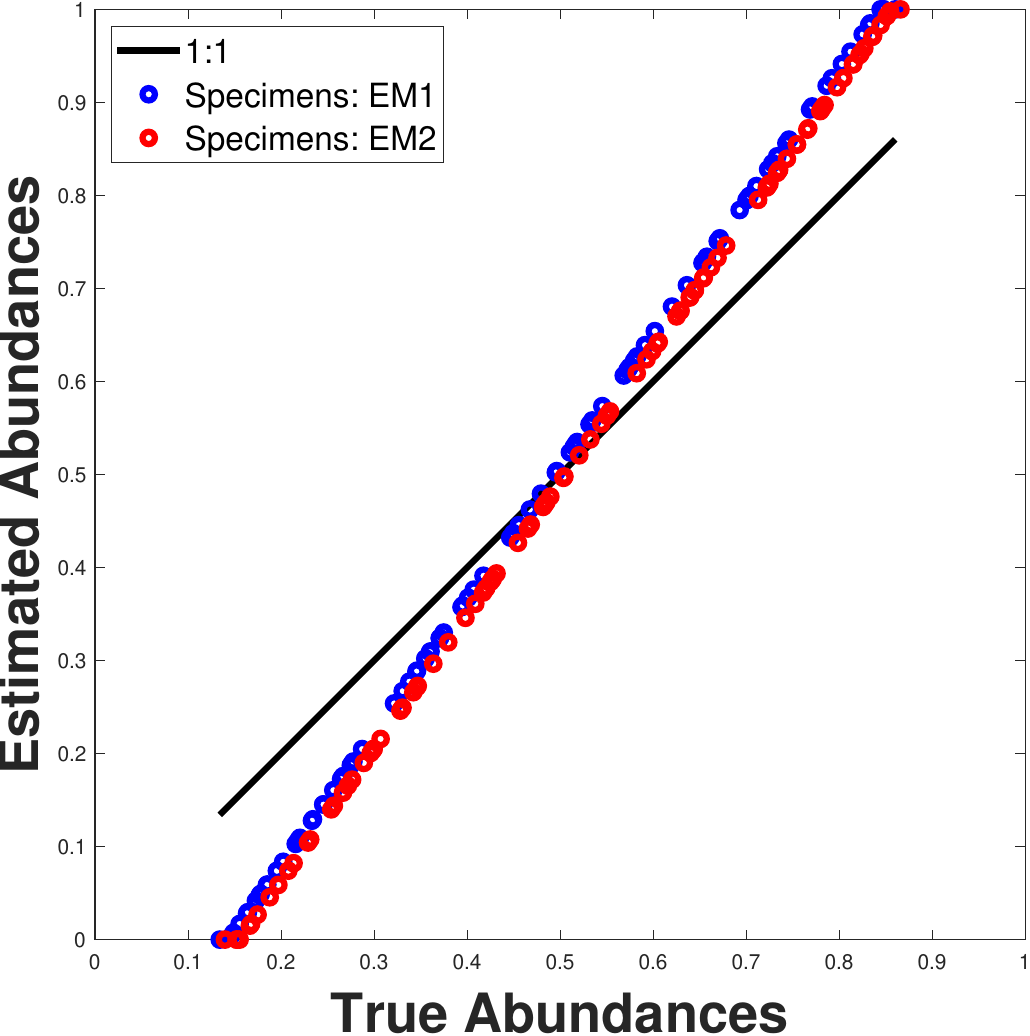}
    \caption{AnalySize: Abundances}
    \label{F: twodataminabundances}
   \end{subfigure} \\
 \begin{subfigure}[b]{0.315\linewidth}
\includegraphics[width=1\textwidth]{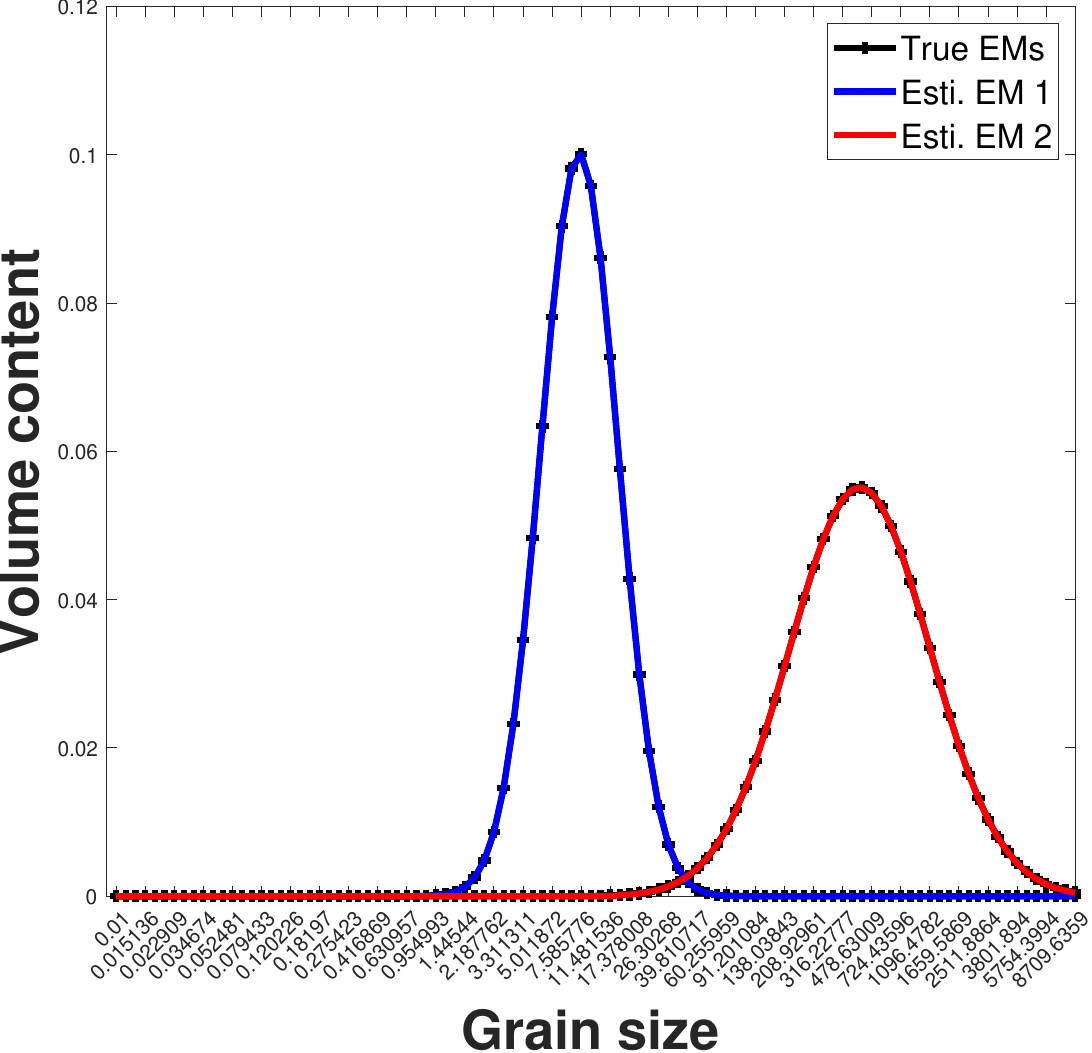}
    \caption{MAD-NMF: End Members}
    \label{F: twodatamaxems}
 \end{subfigure}
    \begin{subfigure}[b]{0.3\linewidth}
\includegraphics[width=1\textwidth]{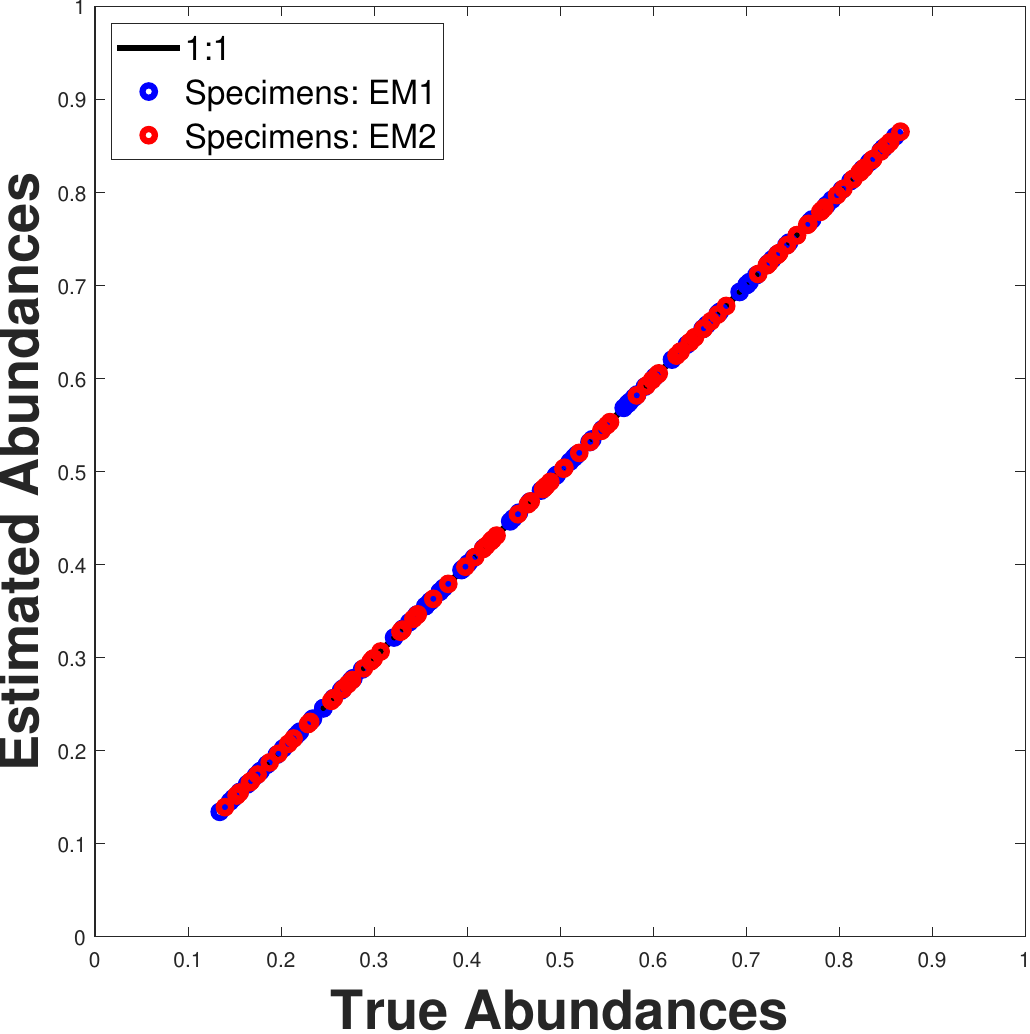}
    \caption{MAD-NMF: Abundances}
    \label{F: twodatamaxabundances}
 \end{subfigure}
 \caption{Highly mixed two-end-member dataset: $\alpha = 5$, $\lambda = 1$ (a and b) AnalySize; (c and d) MAD-NMF.}\label{F: twodata}
    \end{figure}

\begin{figure}
\centering
\begin{subfigure}[b]{0.315\linewidth}
\includegraphics[width=1\textwidth]{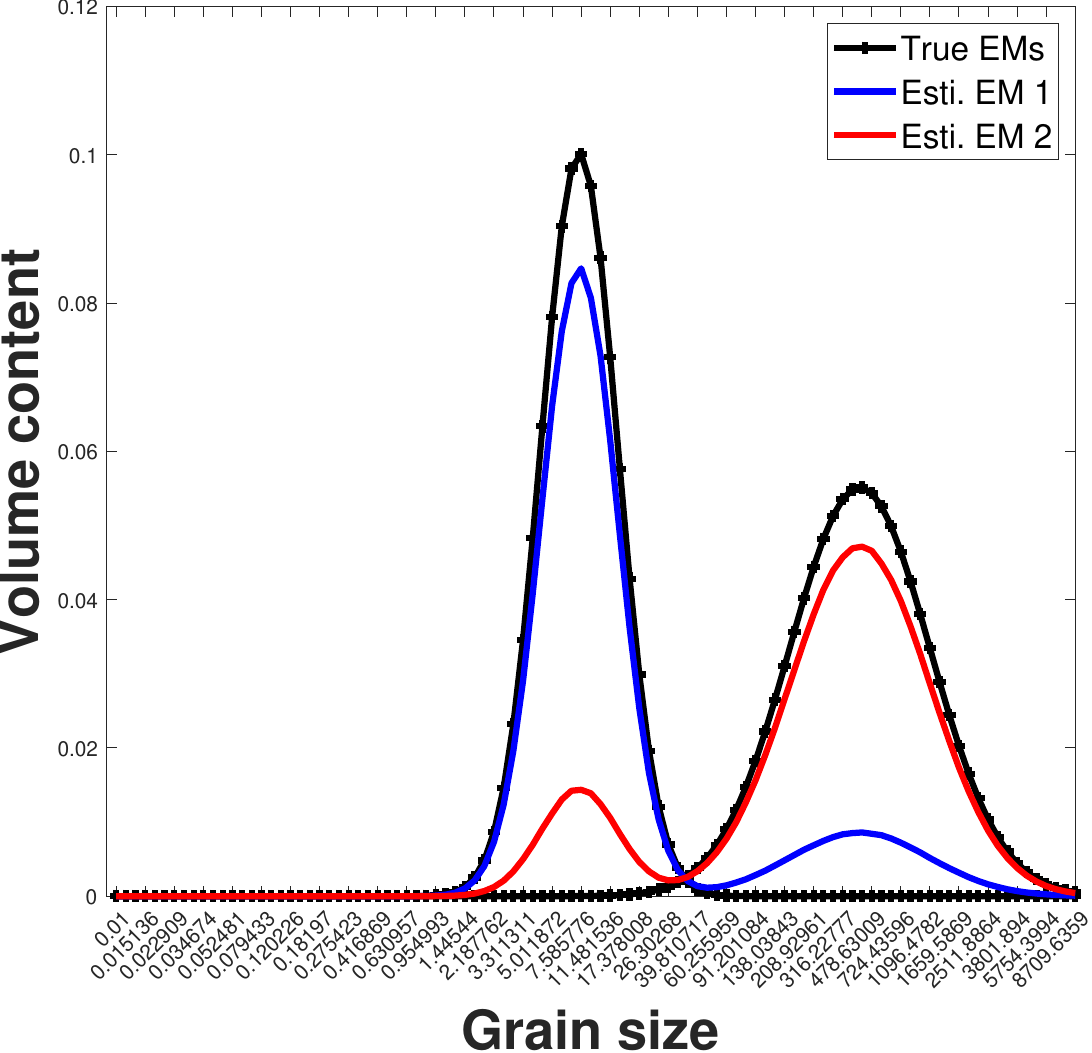}
    \caption{AnalySize: End Members}
    \label{F: noisytwodataminems}
 \end{subfigure}
    \begin{subfigure}[b]{0.3\linewidth}
\includegraphics[width=1\textwidth]{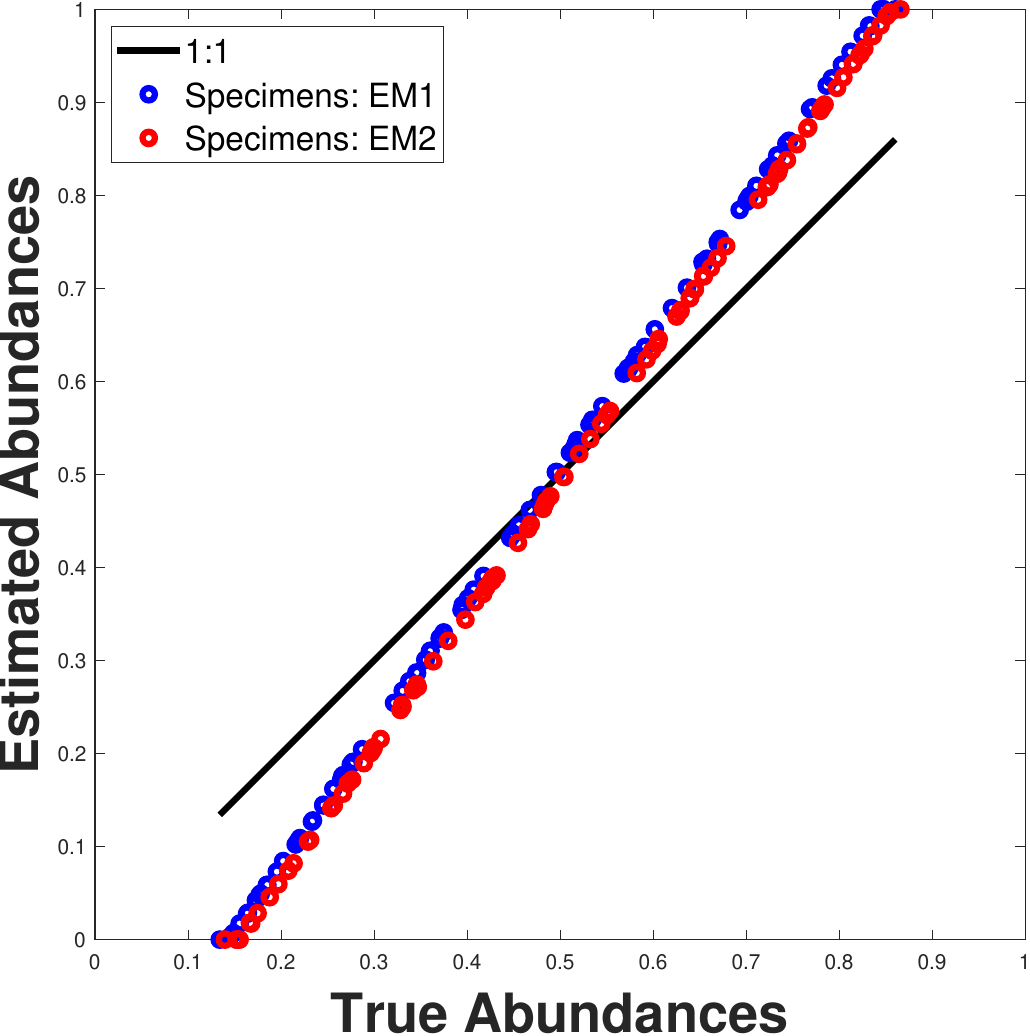}
    \caption{AnalySize: Abundances}
    \label{F: noisytwodataminabundances}
   \end{subfigure} \\
 \begin{subfigure}[b]{0.315\linewidth}
\includegraphics[width=1\textwidth]{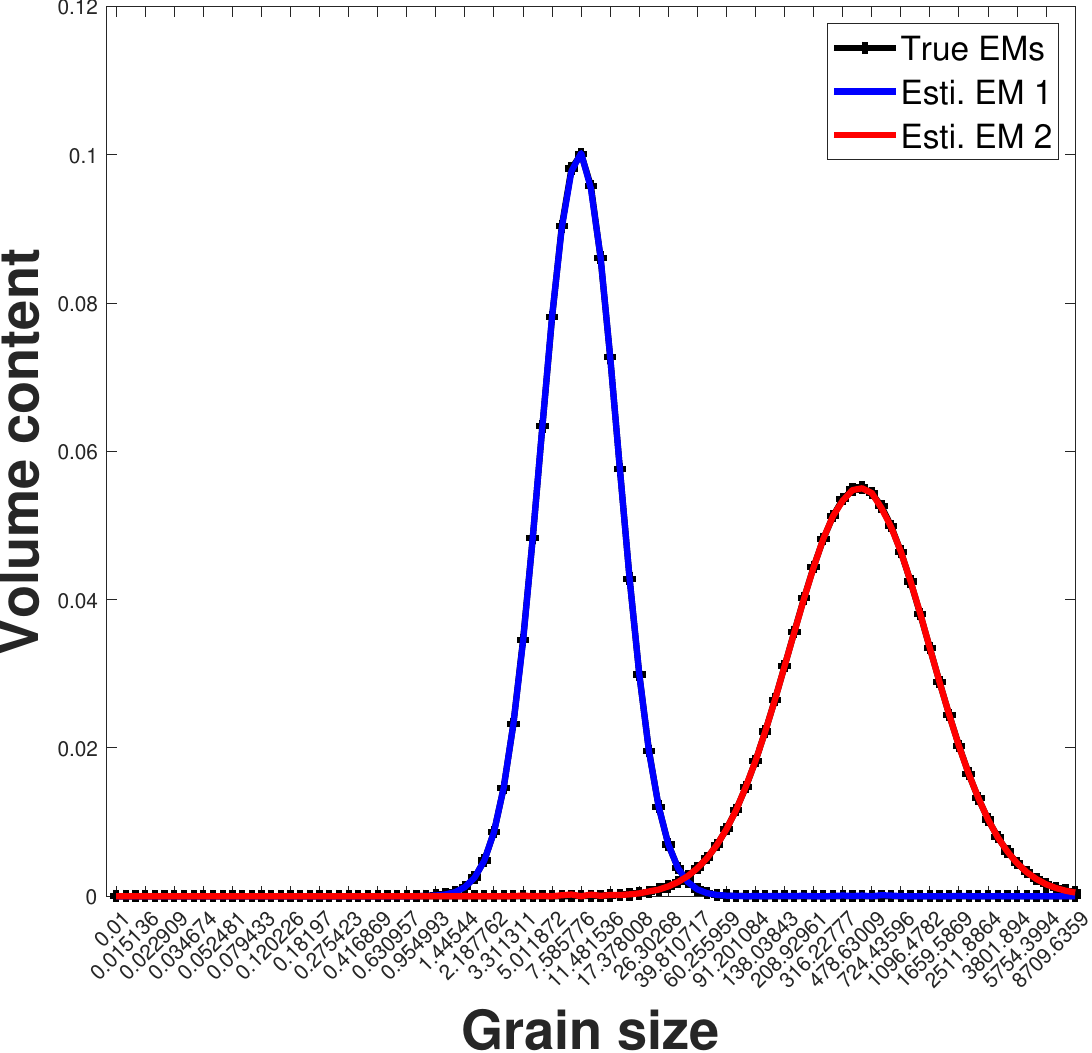}
    \caption{MAD-NMF: End Members}
    \label{F: noisytwodatamaxems}
 \end{subfigure}
    \begin{subfigure}[b]{0.3\linewidth}
\includegraphics[width=1\textwidth]{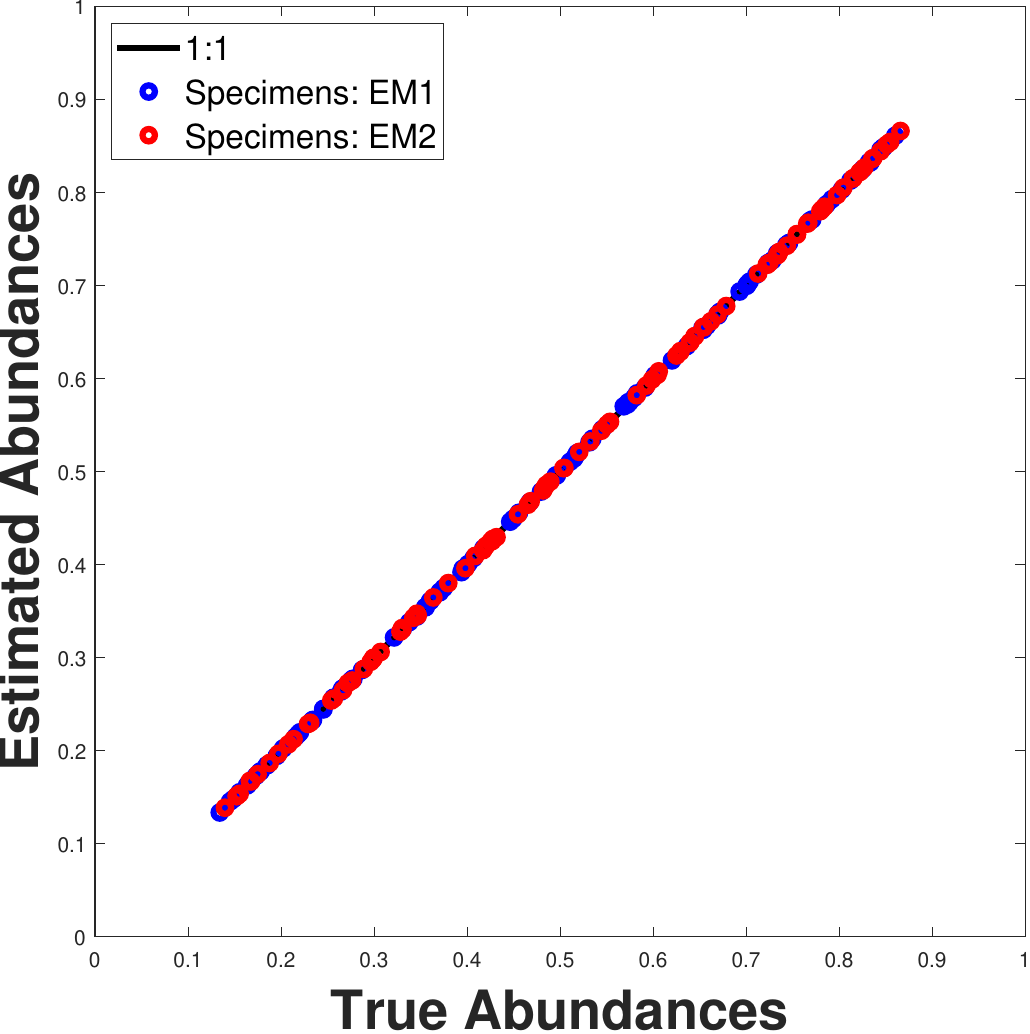}
    \caption{MAD-NMF: Abundances}
    \label{F: noisytwodatamaxabundances}
 \end{subfigure}
 \caption{Noisy highly mixed two-end-member dataset: $\alpha = 5$, $\lambda = 1$ (a and b) AnalySize; (c and d) MAD-NMF.}\label{F: noisytwodata}
    \end{figure}

\begin{table}[H]
\caption{MAEM and MAAB for (noisy) highly mixed two-end-member dataset}
    \label{T: twosourcesangle}
    \centering
    \begin{tabular}{rrr}
    \hline 
    Methods & AnalySize & MAD-NMF\\
    \hline
Highly mixed two-end-member dataset &&\\
 MAEM & 10.2711  &  0.0206  \\
 MAAB &   6.8464    & 0.0284\\
 \hline
Noisy highly mixed two-end-member dataset &&\\
 MAEM & 10.2701 &   0.1361
 \\
 MAAB &  6.8387 &   0.0644
\\
    \hline
    \end{tabular}
\end{table}

Table~\ref{T: twosourcesangle} reports the MAEM and MAAB. MAD-NMF achieves lower values than AnalySize, demonstrating its superior ability to recover the end members and abundances, which is consistent with the visual results shown above.

Table~\ref{T: twosourcesdistanceterm} reports the distances among the estimated end members.
The results show that MAD-NMF yields a larger distance among the estimated end members than AnalySize. This is consistent with their respective objectives, namely that MAD-NMF maximizes the distances among end members while AnalySize minimizes them.

\begin{table}[H]
\caption{The distance among the estimated end members for (noisy) highly mixed two-end-member dataset: $||\bm{C}_{K\times K}\bm{S}||_F^2$}
    \label{T: twosourcesdistanceterm}
    \centering
    \begin{tabular}{rrr}
    \hline 
    Methods & AnalySize & MAD-NMF\\
    \hline
   Highly mixed two-end-member dataset &     0.0269  &   0.0549
 \\
  Noisy highly mixed two-end-member dataset & 0.0269   & 0.0547
 \\
    \hline
    \end{tabular}
\end{table}

\subsection{Highly mixed coversand dataset}

As in the highly mixed two-end-member dataset, we apply AnalySize and MAD-NMF to the highly mixed coversand dataset $\bm{P}$ to estimate the abundance matrix $\bm{A}$ and end-member matrix $\bm{S}$. Figure~\ref{F: coversanddata} presents the estimated end members and abundances obtained by AnalySize and MAD-NMF. Once again, AnalySize fails to recover the true end members and abundances, which is consistent with the findings of \citet{paterson2015new, van2018genetically}. In contrast, MAD-NMF effectively recovers the true end members and abundances. Similar results are obtained for the noisy higly mixed coversand dataset, as shown in Figure~\ref{F: noisycoversanddata}. However, the abundance results from MAD-NMF in the noisy case (see Figure~\ref{F: noisycoversanddatamaxabundances}) are more
scattered than in the noise-free case (See Figure~\ref{F: coversanddatamaxabundances}).

Table~\ref{T: coversandangle} reports the MAEM and MAAB. MAD-NMF achieves lower values than AnalySize, demonstrating its superior ability to recover the end members and abundances, which is consistent with the visual results shown above. Table~\ref{T: coversanddistanceterm} reports the distances among the estimated end members. The results show that MAD-NMF produces larger distance among the estimated end members than AnalySize. This is consistent with their respective objectives, namely that MAD-NMF maximizes the distances among end members while AnalySize minimizes them.

\begin{figure}
\centering
\begin{subfigure}[b]{0.315\linewidth}
\includegraphics[width=1\textwidth]{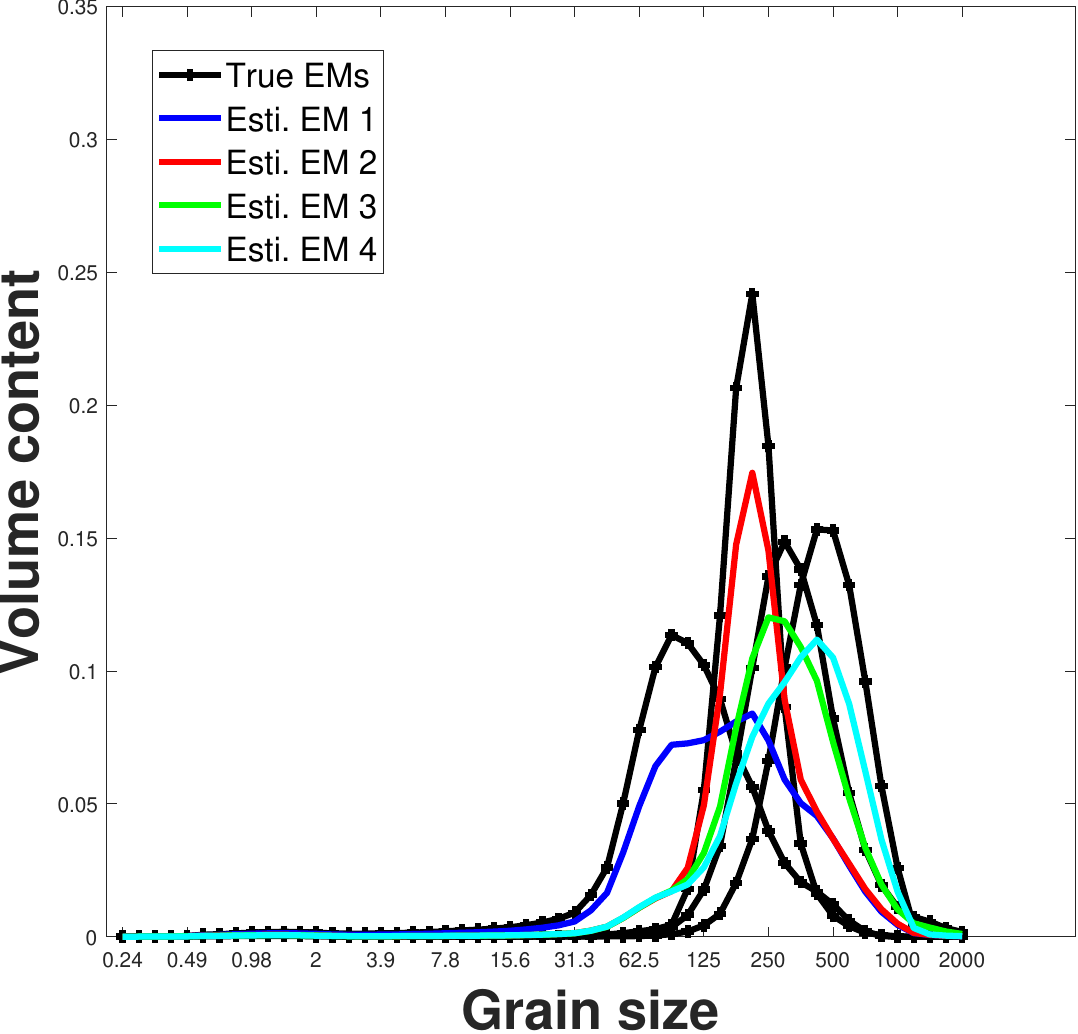}
    \caption{AnalySize: End Members}
    \label{F: coversanddataminems}
 \end{subfigure}
    \begin{subfigure}[b]{0.3\linewidth}
\includegraphics[width=1\textwidth]{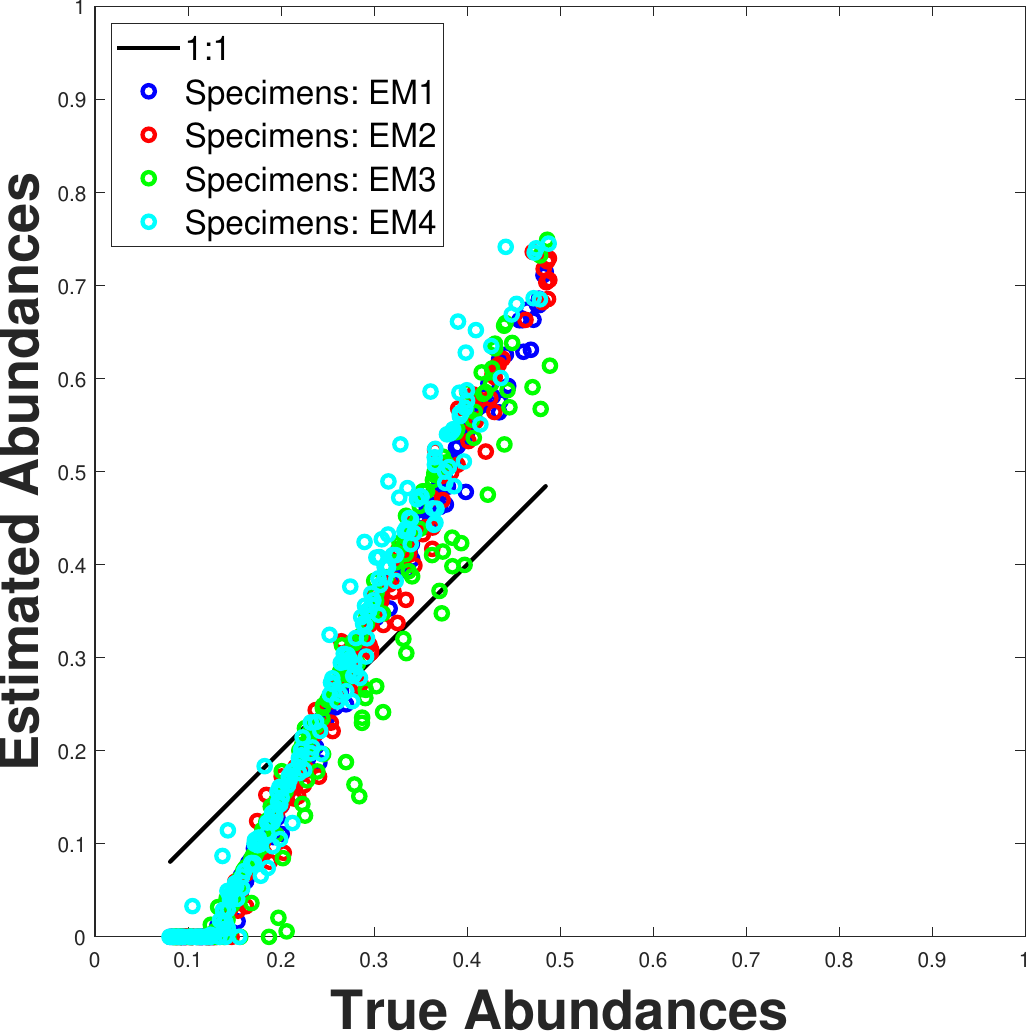}
    \caption{AnalySize: Abundances}
    \label{F: coversanddataminabundances}
   \end{subfigure} \\
 \begin{subfigure}[b]{0.315\linewidth}
\includegraphics[width=1\textwidth]{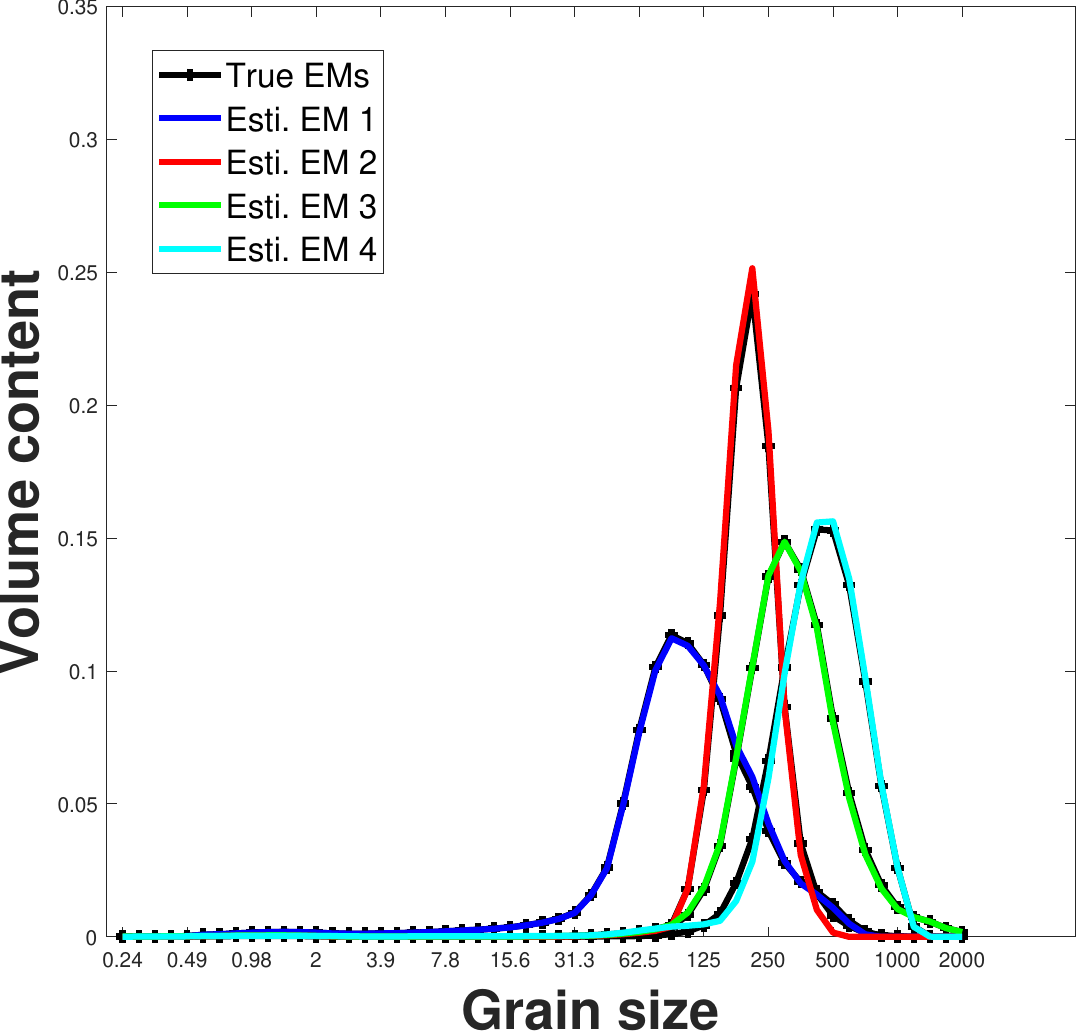}
    \caption{MAD-NMF: End Members}
    \label{F: coversanddatamaxbasis}
 \end{subfigure}
    \begin{subfigure}[b]{0.3\linewidth}
\includegraphics[width=1\textwidth]{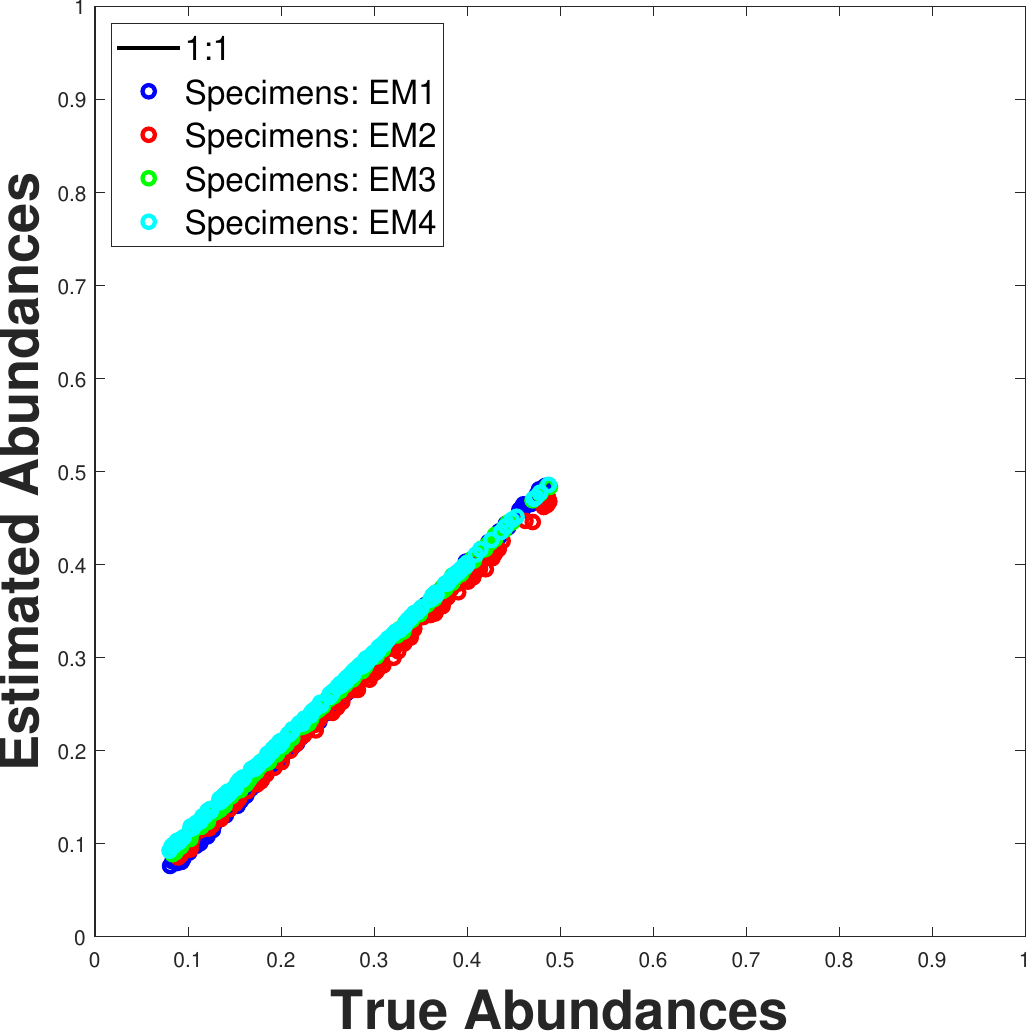}
    \caption{MAD-NMF: Abundances}
    \label{F: coversanddatamaxabundances}
 \end{subfigure}
 \caption{Highly mixed coversand dataset: $\alpha = 100$, $\lambda = 3.5$ (a and b) AnalySize; (c and d) MAD-NMF.}\label{F: coversanddata}
    \end{figure}

\begin{figure}[h]
\centering
\begin{subfigure}[b]{0.315\linewidth}
\includegraphics[width=1\textwidth]{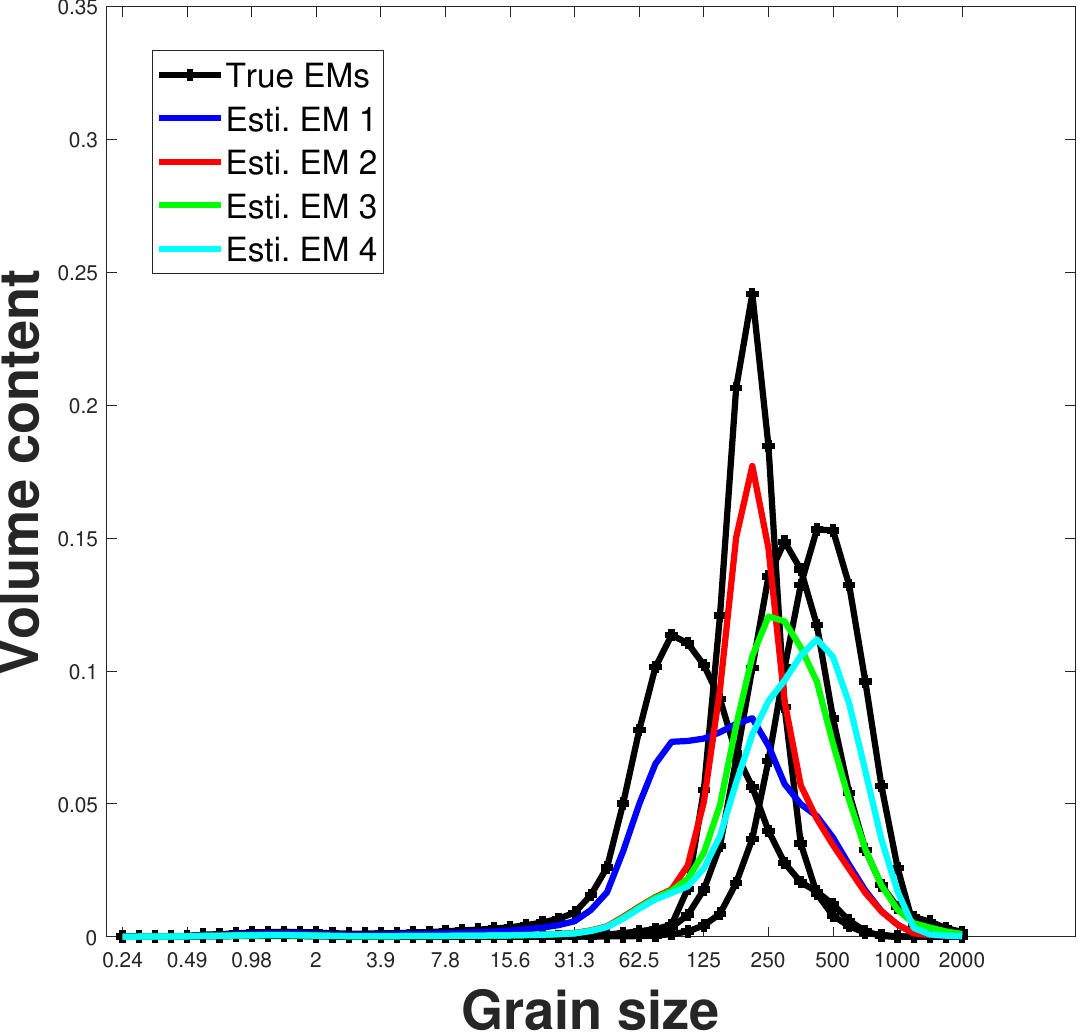}
    \caption{AnalySize: End Members}
    \label{F: noisycoversanddataminems}
 \end{subfigure}
    \begin{subfigure}[b]{0.3\linewidth}
\includegraphics[width=1\textwidth]{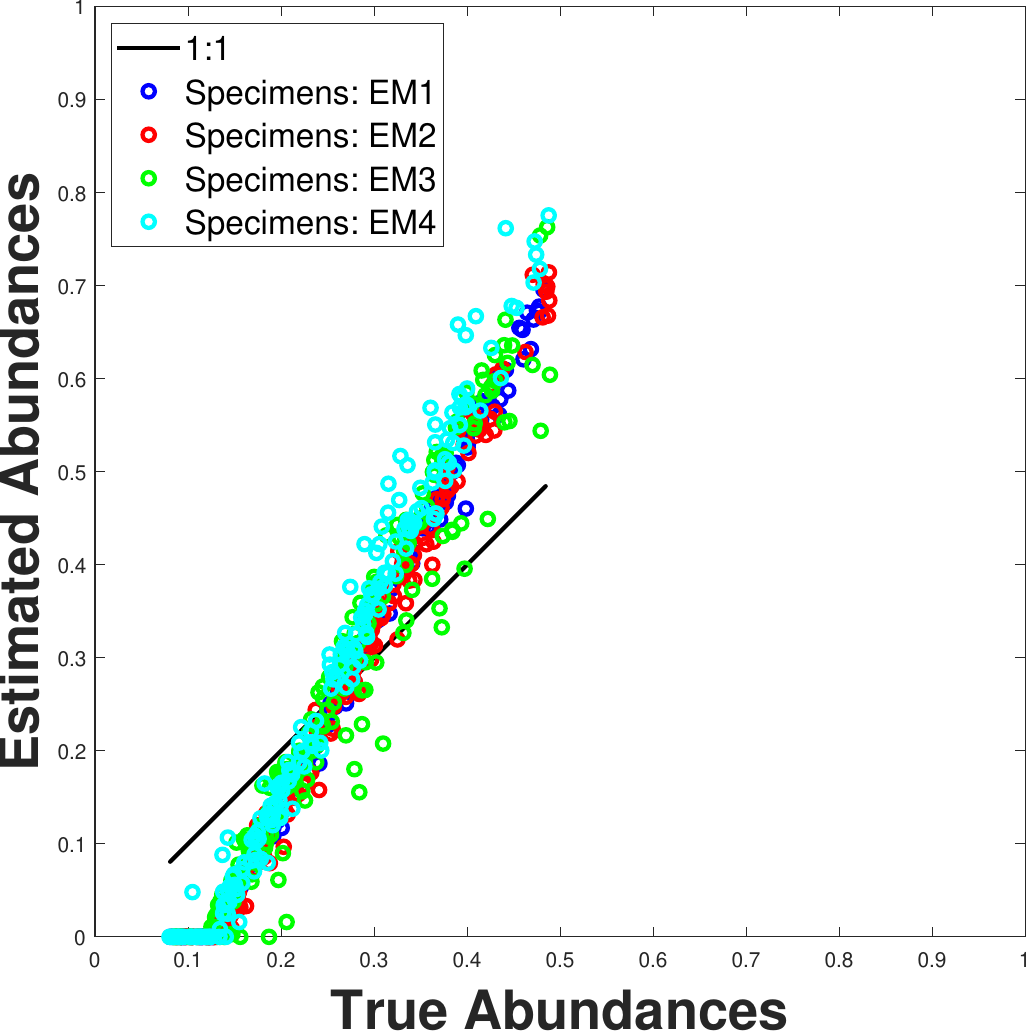}
    \caption{AnalySize: Abundances}
    \label{F: noisycoversanddataminabundances}
   \end{subfigure} \\
 \begin{subfigure}[b]{0.315\linewidth}
\includegraphics[width=1\textwidth]{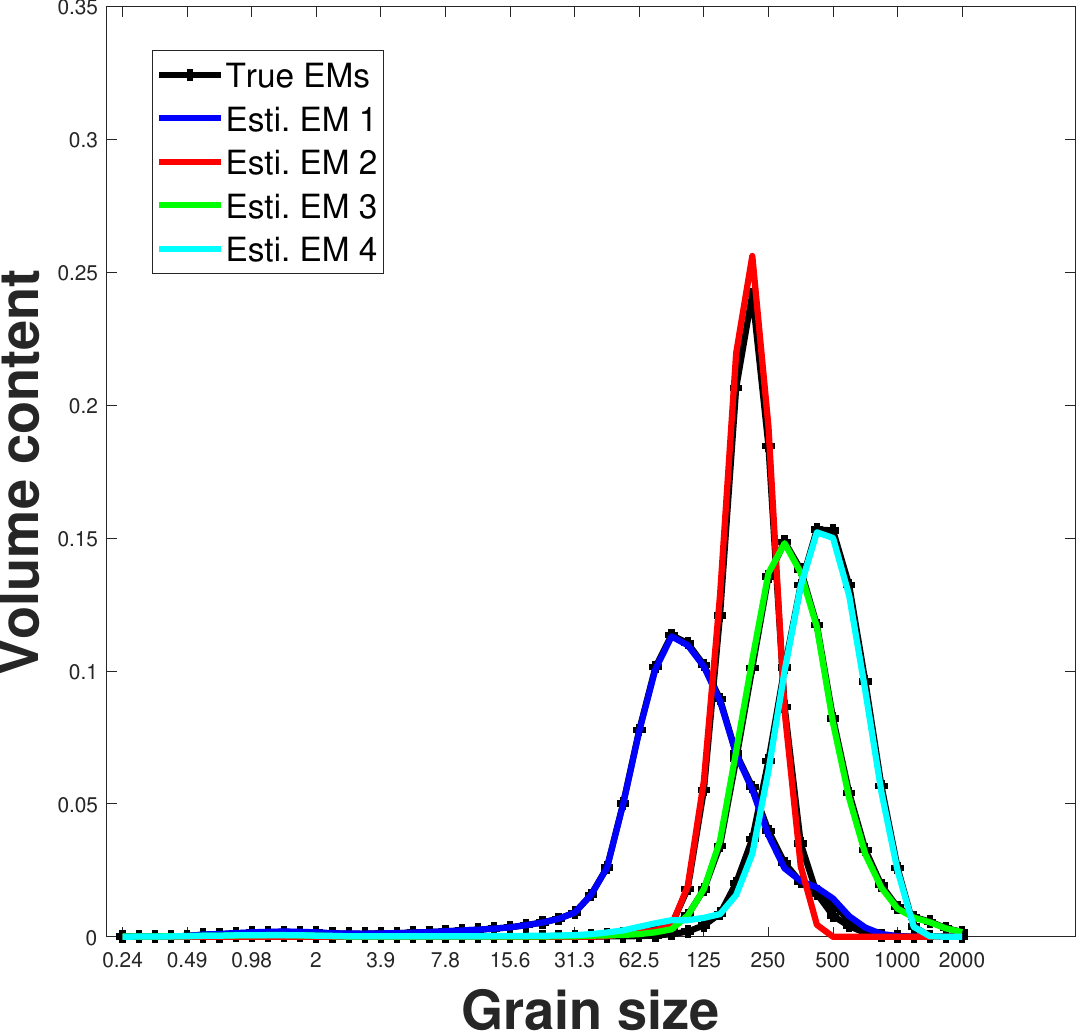}
    \caption{MAD-NMF: End Members}
    \label{F: noisycoversanddatamaxbasis}
 \end{subfigure}
    \begin{subfigure}[b]{0.3\linewidth}
\includegraphics[width=1\textwidth]{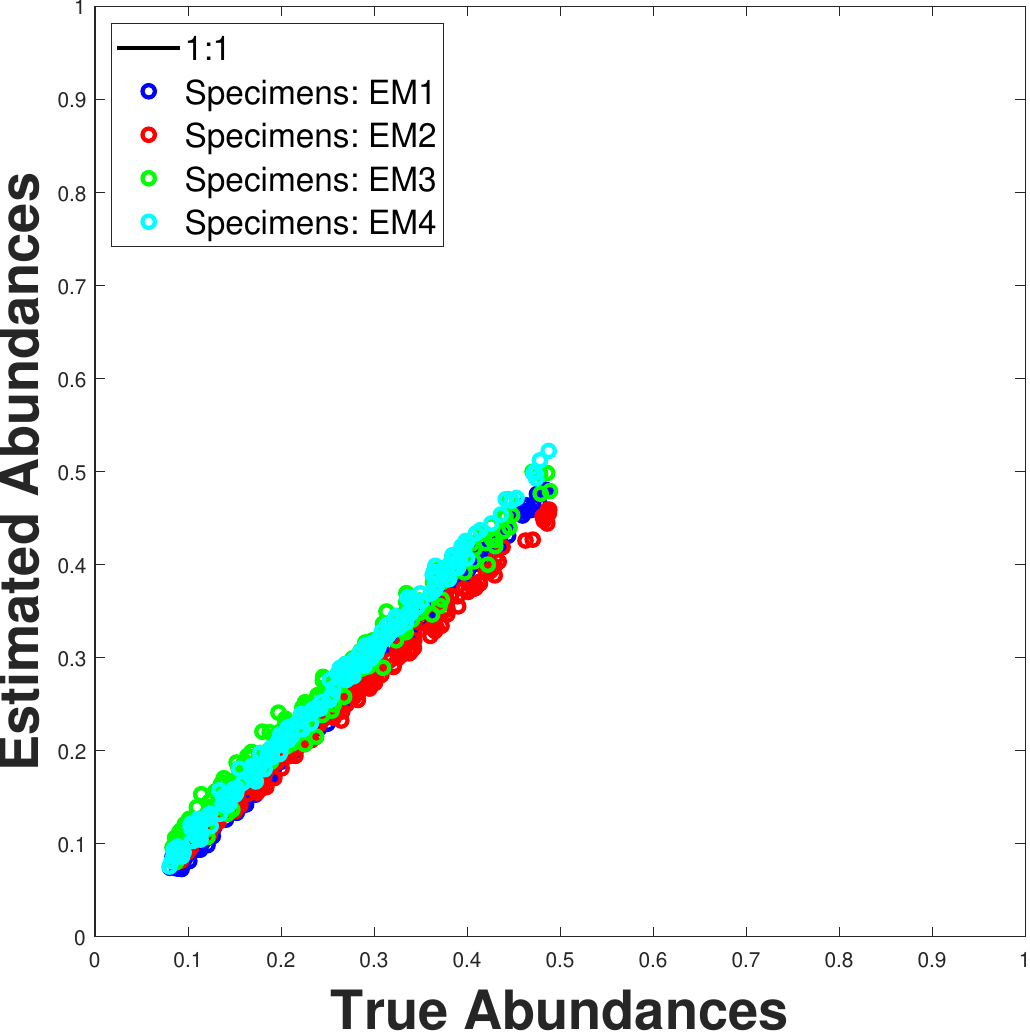}
    \caption{MAD-NMF: Abundances}
    \label{F: noisycoversanddatamaxabundances}
 \end{subfigure}
 \caption{Noisy highly mixed coversand dataset: $\alpha = 100$, $\lambda = 3.3$ (a and b) AnalySize; (c and d) MAD-NMF.}\label{F: noisycoversanddata}
    \end{figure}

\begin{table}[H]
\caption{MAEM and MAAB for (noisy) highly mixed coversand dataset}
    \label{T: coversandangle}
    \centering
    \begin{tabular}{rrr}
    \hline 
    Methods & AnalySize & MAD-NMF\\
    \hline
Highly mixed coversand dataset &&\\
 MAEM &  16.9234  &  1.6120
\\
 MAAB &   16.4869  &  1.4876
\\
 \hline
Noisy highly mixed coversand dataset &&\\
 MAEM & 16.5605  &  1.7555  \\
 MAAB & 16.1119  &  3.0036\\
    \hline
    \end{tabular}
\end{table}

\begin{table}[H]
\caption{The distance among the estimated end members for (noisy) highly mixed coversand dataset: $||\bm{C}_{K\times K}\bm{S}||_F^2$}
    \label{T: coversanddistanceterm}
    \centering
    \begin{tabular}{rrr}
    \hline 
    Methods & AnalySize & MAD-NMF\\
    \hline
   Highly mixed coversand dataset & 0.0405 &   0.1820
 \\
  Noisy highly mixed coversand dataset & 0.0424  &  0.1812
 \\
    \hline
    \end{tabular}
\end{table}

\section{Conclusion and discussion}\label{s: con}

In this paper, we propose maximum-distance nonnegative matrix factorization (MAD-NMF) for highly mixed grain-size distribution datasets. The proposed method is optimized using a hierarchical alternating least squares (HALS) algorithm. MAD-NMF can be regarded as a variant of AnalySize with a negative distance regularization term that encourages greater separation among the end members, allowing them to be located farther from the observed specimens. This property makes MAD-NMF suitable for highly mixed datasets.

Following the experimental settings of \citet{paterson2015new, van2018genetically, ZHANG2020106656, qi2026unmixing}, we evaluated MAD-NMF on (noisy) highly mixed two-end-member dataset and (noisy) highly mixed coversand datasets. The results demonstrate that AnalySize fails to recover the true end members and abundances from highly mixed datasets, whereas MAD-NMF effectively recovers the true end members and abundances.

Several limitations remain. First, our experiments are limited to simulated datasets, and the performance of MAD-NMF on real grain-size distribution datasets requires further investigation. Second, only a relatively low noise level (Gaussian noise with standard deviation 0.01) is considered. Future work should evaluate the robustness of MAD-NMF under higher noise levels and different noise distributions. Third, the influences of the regularization parameters $\lambda$ and $\alpha$ have not been systematically studied. Developing principled strategies for selecting $\lambda$ and $\alpha$ and analyzing their effects on the performance are important directions for future research.

\section*{Statements and Declarations}

\textbf{Competing Interests: }Author Qianqian Qi, Author Zhongming Chen, and Author Peter G. M. van der Heijden declare none.

\bibliography{references.bib}

\end{document}